\documentclass[12pt]{article}
\usepackage[letterpaper,margin=1in]{geometry}

\usepackage{mathtools}
\usepackage{booktabs}
\usepackage[english]{babel} 
\usepackage[protrusion=true,expansion=true]{microtype} 
\usepackage{amsmath,amsfonts,amsthm}
\allowdisplaybreaks[4]
\usepackage{ amssymb }
\usepackage{enumitem}
\usepackage{graphicx}
\usepackage{array}
\usepackage[toc,page]{appendix}
\usepackage{xcolor}
\usepackage{tikz}
\usetikzlibrary{arrows.meta,positioning}

\usepackage{booktabs} 
\usepackage{natbib}
\usepackage{setspace}
\usepackage{microtype} 
\usepackage{tabularx}
\usepackage{subcaption}
\usepackage[font = small,labelfont=bf,textfont=it]{caption} 
\usepackage{footnote}
\usepackage{algorithm}
\usepackage[noend]{algpseudocode}
\usepackage{multirow}
\usepackage[version=4]{mhchem}
\usepackage[hidelinks]{hyperref}

\makeatletter
\newcommand{\distas}[1]{\mathbin{\overset{#1}{\kern\z@\sim}}}%
\newsavebox{\mybox}\newsavebox{\mysim}
\newtheorem{theorem}{Theorem}[section]
\newtheorem{corollary}[theorem]{Corollary}

\newtheorem{lemma}[theorem]{Lemma}
\theoremstyle{definition}

\newtheorem{assumption}{Assumption}[section]

\newtheorem{remark}[theorem]{Remark}
\newcommand{\distras}[1]{%
  \savebox{\mybox}{\hbox{\kern3pt$\scriptstyle#1$\kern3pt}}%
  \savebox{\mysim}{\hbox{$\sim$}}%
  \mathbin{\overset{#1}{\kern\z@\resizebox{\wd\mybox}{\ht\mysim}{$\sim$}}}%
}
\newcolumntype{C}[1]{>{\centering\let\newline\\\arraybackslash\hspace{0pt}}m{#1}}

\newcommand{\blind}{1}

\newcommand{\bW}{\mathbf{W}}
\newcommand{\bX}{\mathbf{X}}
\newcommand{\bY}{\mathbf{Y}}
\newcommand{\bU}{\mathbf{U}}
\newcommand{\bV}{\mathbf{V}}
\newcommand{\bk}{\mathbf{k}}
\newcommand{\btheta}{\boldsymbol{\theta}}
\newcommand{\cL}{\mathcal{L}}

\newcommand{\bh}{\mathbf{h}}
\newcommand{\bm}{\mathbf{m}}
\newcommand{\bv}{\mathbf{v}}
\newcommand{\bx}{\mathbf{x}}
\newcommand{\bw}{\mathbf{w}}
\newcommand{\bd}{\mathbf{d}}

\begin{document}

\def\spacingset#1{\renewcommand{\baselinestretch}%
{#1}\small\normalsize} \spacingset{1.3}


\if1\blind
{
 \centering{\bf\Large 
 Efficient Deep Gaussian Process Surrogates via Mini-Batch Latent MAP Estimation}\\
  \vspace{0.3in}
  \centering{Chun-Yi Chang and Chih-Li Sung\vspace{0.1in}\\
        Michigan State University\\
        }
    \date{\vspace{-7ex}}
} \fi

\if0\blind
{
  \bigskip
  \bigskip
  \bigskip
    \begin{center}
    {\LARGE\bf Efficient Deep Gaussian Process Surrogates via Mini-Batch Latent MAP Estimation}
\end{center}
  \medskip
} \fi

\bigskip
\begin{abstract}
    Deep Gaussian processes (DGPs) are flexible surrogate models for computer experiments with nonstationary response surfaces, but inference over their latent representations remains computationally challenging. Existing methods typically rely on variational inference or Markov chain Monte Carlo (MCMC), which impose variational-family assumptions or require costly posterior sampling. We propose the latent maximum-a-posteriori deep Gaussian process (LaMP-DGP), which uses gradient-based optimization to obtain MAP estimates of the latent representations and enables efficient probabilistic prediction through linked-GP calculations.  To scale LaMP-DGP to large datasets, we develop a mini-batch optimization algorithm and, under a polynomial mini-batch bias-decay condition, establish a stationarity bound for latent-state optimization that motivates a growing batch-size schedule. Across numerical experiments and real-data analyses, LaMP-DGP provides consistently competitive predictive performance relative to doubly stochastic variational inference, stochastic gradient Hamiltonian Monte Carlo, and fully Bayesian Vecchia DGP inference. Computationally, LaMP-DGP is substantially faster than the Vecchia MCMC approach while remaining broadly competitive with the stochastic-gradient alternatives. These results position LaMP-DGP as a competitive and computationally efficient alternative for DGP inference. An \textsf{R} package, \texttt{LaMPDGP}, is available for implementing the proposed method.
\end{abstract}

\noindent%
{\it Keywords}: computer experiments; nonstationary Gaussian process; uncertainty quantification; latent maximum a posteriori; mini-batch optimization
\vfill

\newpage
\spacingset{1.45} 

\section{Introduction}
Computer simulation plays a crucial role across a wide range of fields, including aerospace engineering \citep{mak2018efficient}, physics \citep{MORITA2022110788}, and cell biology \citep{sung2020generalized}, where obtaining precise solutions is essential, yet physical experiments are often prohibitively expensive. To address this challenge, surrogate models  have emerged as computationally efficient approaches for approximating complex systems \citep{gramacy2020surrogates}. Gaussian Processes (GPs) \citep{rasmussen2006gaussian} are one of the methods that have gained significant attention due to their flexibility and ability to capture nonlinear and complex functional relationship. Moreover, the inherent uncertainty quantification makes GPs particularly valuable for sensitivity analysis, model calibration, and sequential design, thereby supporting informed decision-making in real-world applications.

Despite their advantages, standard GPs often struggle with non-stationary problems because they typically rely on a single stationary kernel, which assumes a uniform level of smoothness across the input space. This restrictive assumption can oversmooth rapidly varying regions and consequently lead to poor predictive performance. Deep Gaussian processes (DGPs) address this limitation by composing multiple GP mappings within a hierarchical architecture, enabling flexible modeling of complex and nonstationary input–output relationships. DGPs were introduced by \citet{damianou2013deep}, who developed a variational inference (VI) framework \citep{blei2017variational} that approximately marginalizes the intermediate latent variables through variational distributions, thereby enabling multiple GP layers to be composed. 

Since then, numerous inference methods have been developed to improve the predictive performance and computational scalability of DGPs. One notable extension of VI-based approaches is the doubly stochastic variational inference (DSVI) method proposed by \cite{salimbeni2017doubly}. By combining inducing-point approximations with stochastic optimization over data mini-batches and Monte Carlo propagation through the GP layers, DGP-DSVI substantially improves the scalability of variational inference for DGPs. However, its approximation quality depends on the variational family, the inducing-point representation, and the Monte Carlo approximation, which can affect predictive accuracy and uncertainty quantification.

As an alternative to variational inference, \citet{DGPHMC2018} proposed stochastic gradient Hamiltonian Monte Carlo (SGHMC) for DGPs, using SGHMC to sample the posterior distribution of the inducing outputs and thereby allowing more flexible non-Gaussian posterior structure than Gaussian variational approximations. A different sampling-based approach was developed by \citet{sauer2023active}, who proposed a fully Bayesian MCMC framework that samples both the latent representations and covariance hyperparameters, providing more comprehensive posterior uncertainty quantification at a substantially higher computational cost. To improve scalability, \citet{sauer2023vecchia} subsequently incorporated the Vecchia approximation \citep{katzfuss2021general} into this MCMC framework while preserving the advantages of fully Bayesian inference. We refer to these approaches as DGP-SGHMC and MCMC-VEC, respectively, throughout the remainder of the paper.

Motivated by the need for computationally efficient DGP inference without specifying a variational family or running lengthy MCMC chains over the training latent states, we propose the latent maximum-a-posteriori deep Gaussian process (LaMP-DGP). LaMP-DGP uses gradient-based optimization to obtain a maximum-a-posteriori (MAP) estimate of the latent representation while jointly learning the kernel hyperparameters. Under the Gaussian likelihood and squared-exponential kernels considered here, we derive analytical gradients for the MAP objective, thereby avoiding costly finite-difference approximations.  For prediction, conditional on the fitted latent states and covariance hyperparameters, we follow the linked-GP framework of \citet{Ming03042023} to analytically integrate over uncertainty in the test-point latent representation, enabling efficient probabilistic prediction without prediction-time Monte Carlo sampling.

Rather than approximating or sampling the entire posterior distribution, MAP identifies a latent configuration with high posterior density given the observed inputs and responses. The likelihood favors configurations that explain the observed responses, whereas the GP prior regularizes the latent representation according to the covariance structure induced by the inputs. LaMP-DGP therefore recasts inference over the training latent states as a regularized optimization problem. This MAP perspective has an established precedent in Gaussian process latent-variable models (GP-LVMs), where latent coordinates are optimized as point estimates under prior regularization \citep{lawrence2003gaussian,JMLR:v6:lawrence05a}, with extensions to structured priors and multiple latent layers \citep{NIPS2005_ccd45007,lawrence2007hierarchical}. LaMP-DGP builds on this perspective in the supervised regression setting, optimizing GP-regularized latent states directly from the observed input--response pairs.

MAP inference eliminates posterior sampling, but for larger datasets each objective evaluation still requires factorizing covariance matrices whose dimensions grow with the training sample size. To address this computational bottleneck, we develop a mini-batch optimization algorithm for LaMP-DGP that updates the latent states and kernel hyperparameters using subsets of the training data. Unlike standard empirical-risk objectives that decompose additively across observations, the LaMP-DGP negative log-posterior depends jointly on the observations through covariance matrices, so its mini-batch gradients are generally biased \citep{chen2022gaussian}. Under suitable regularity conditions and a polynomial bias-decay condition, we derive a nonconvex stationarity bound that separates the usual optimization error from the additional error induced by mini-batch bias. This analysis motivates a progressively increasing mini-batch schedule that reduces this bias while controlling computational cost and, under the stated conditions, preserves an overall \(\mathcal{O}(T^{-1/2})\) stationarity rate, where \(T\) denotes the number of optimization epochs.

Within this framework, our contributions are threefold: (i) we introduce LaMP-DGP, a new optimization-based inference framework for DGPs that directly estimates the posterior modes of the training latent representations, avoiding both variational-family approximations and MCMC sampling, while enabling efficient probabilistic prediction through linked-GP calculations; (ii) we develop a mini-batch optimization algorithm for larger datasets and, under a polynomial bias-decay condition, establish a nonconvex stationarity bound that motivates a progressively increasing mini-batch schedule; and (iii) through extensive numerical experiments and real-world computer experiments, we demonstrate that LaMP-DGP provides consistently competitive predictive performance across a broad range of problems while substantially reducing computational cost relative to fully Bayesian Vecchia DGP inference. We provide an open-source implementation in the \texttt{LaMPDGP} package for \textsf{R}, available in the supplementary material.

The remainder of this paper is organized as follows. Section~\ref{sec:DGP} reviews DGPs. Section~\ref{sec:LaMP-DGP} introduces the proposed LaMP-DGP framework. Section~\ref{sec:LaMP-DGP SGD} presents the mini-batch algorithm together with its convergence analysis. Section~\ref{sec:numerical} evaluates LaMP-DGP using a collection of synthetic benchmark functions, and Section~\ref{sec:real data} further assesses its performance on real-world computer experiments. Finally, Section~\ref{sec:conclusion} concludes the paper with a discussion of the main findings and future directions.

\section{Deep Gaussian Processes (DGPs)} \label{sec:DGP}

Let \(f:\mathbb{R}^d\to\mathbb{R}\) denote a computationally expensive
black-box function, and suppose the observed output at input \(\bx\) is
\[
y=f(\bx)+\epsilon,
\qquad
\epsilon\sim\mathcal{N}(0,\sigma_\epsilon^2).
\]
Given a design matrix
\(
\bX=(\bx_1,\ldots,\bx_n)^\top\in\mathbb{R}^{n\times d},
\)
let
\(
Y=
\big(
f(\bx_1)+\epsilon_1,\ldots,
f(\bx_n)+\epsilon_n
\big)^\top
\in\mathbb{R}^n
\)
denote the corresponding observed outputs, where
\(
\epsilon_1,\ldots,\epsilon_n
\overset{\mathrm{iid}}{\sim}
\mathcal{N}(0,\sigma_\epsilon^2).
\)
The deterministic case is recovered by setting
\(\sigma_\epsilon^2=0\). Gaussian process (GP) models are widely used as surrogates for modeling $f$ because they provide both flexible prediction and uncertainty quantification \citep{gramacy2020surrogates}. Under a stationary GP, the covariance between two outputs depends on the inputs only through their relative separation; that is, \(\operatorname{Cov}\{f(\bx),f(\bx')\}\) depends on $\bx-\bx'$.

While effective for many applications, this assumption can be restrictive when the response surface exhibits spatially varying smoothness, local features, or other forms of nonstationarity \citep{booth2023nonstationary}. In such cases, a single stationary covariance structure in the original input space may not adequately represent the complexity of \(f\). Deep Gaussian processes (DGPs) provide an alternative to conventional stationary GP surrogates by introducing latent GP layers, allowing more flexible representations of nonstationary response surfaces. Here, we consider a two-layer DGP, with its hierarchical structure and associated parameters summarized in Figure~\ref{fig:LaMP-DGP}.

\begin{figure}[t]
    \centering

    \resizebox{0.65\textwidth}{!}{
    \begin{tikzpicture}[
    node distance=3cm and 3cm,
    roundnode/.style={circle, draw=blue!40, fill=blue!5, very thick, minimum size=10mm},
    annotation/.style={draw=none, font=\small},
    arrow/.style={-{Latex[length=2mm]}, thick},
    line/.style={-, thick},
    roundedbox/.style={
        draw,
        rectangle,
        rounded corners=6pt,
        minimum height=8mm,
        minimum width=12mm,
        align=center}
    ]

    \node[roundnode](X){$\bX$};
    \node[roundedbox](paraW1)[right=1cm of X, yshift=2cm]{$f_{11}$};
    \node[roundedbox](paraW2)[right=1cm of X, yshift=0.4cm]{$f_{12}$};
    \node[roundedbox](paraWD)[right=1cm of X, yshift=-2.5cm]{$f_{1D}$};

    \node[roundnode](W1)[right=1.5cm of paraW1]{$W_{\cdot1}$};
    \node[roundnode](W2)[right=1.5cm of paraW2]{$W_{\cdot2}$};
    \node[roundnode](WD)[right=1.5cm of paraWD]{$W_{\cdot D}$};

    \node[roundedbox](paraY)[right=1.2cm of W2, yshift=-0.4cm]{$f_2$};
    \node[roundnode](Y)[right=1.8cm of paraY]{$Y$};

    \node[draw=none, below=0.2cm of W2] (dots1) {$\vdots$};
    \node[draw=none, below=0.35cm of paraW2] (dots4) {$\vdots$};

    \draw[line] (X) -- (paraW1);
    \draw[line] (X) -- (paraW2);
    \draw[line] (X) -- (paraWD);

    \draw[arrow] (paraW1) -- node[midway, above] {$\theta_{1,1}$} (W1);
    \draw[arrow] (paraW2) -- node[midway, above] {$\theta_{1,2}$} (W2);
    \draw[arrow] (paraWD) -- node[midway, above] {$\theta_{1,D}$} (WD);

    \draw[line] (W1) -- (paraY);
    \draw[line] (W2) -- (paraY);
    \draw[line] (WD) -- (paraY);

    \draw[arrow] (paraY) -- node[midway, above] {$\theta_{2}, \tau^2, g$} (Y);

    \end{tikzpicture}
    }

    \caption{Hierarchical structure of a two-layer DGP. The first-layer GP mappings \(f_{1j}\), \(j=1,\ldots,D\), with covariance parameters \(\theta_{1,j}\), map the inputs \(\bX\) to the latent representations \(W_{\cdot j}\). The second-layer GP \(f_2\), parameterized by \(\theta_2\), \(\tau^2\), and nugget \(g\), maps the latent representation \(W=(W_{\cdot1},\ldots,W_{\cdot D})\) to the observed response \(Y\).}
    \label{fig:LaMP-DGP}
    \end{figure}

Specifically, a two-layer DGP can be represented as
\[
f(\bx)
=
f_2\big(
f_{11}(\bx),\ldots,f_{1D}(\bx)
\big),
\]
where each \(f_{1j}:\mathbb{R}^d\to\mathbb{R}\) maps the original input
to a latent variable, and
\(f_2:\mathbb{R}^D\to\mathbb{R}\) maps the resulting
\(D\)-dimensional latent representation to the response. We denote the
evaluations of the \(j\)th latent GP at the design inputs by
\(
W_{\cdot j}
=
f_{1j}(\bX)
:=
\big(
f_{1j}(\bx_1),\ldots,f_{1j}(\bx_n)
\big)^\top,
\)
and collect them into the latent matrix
\(
\bW
=
(W_{\cdot1},\ldots,W_{\cdot D})
\in\mathbb{R}^{n\times D}.
\)
Using the same convention,
\(
f(\bX)=f_2(\bW).
\)

We assign GP priors to both layers, with
$f_{11},\ldots,f_{1D}$ modeled as mutually independent GPs and
$f_2$ assigned a separate GP prior:
\begin{align*}
f_{1j}(\bx)
&\overset{\mathrm{ind}}{\sim}
\mathcal{GP}\!\left(
0,K_{\theta_{1j}}(\bx,\bx')
\right),
\qquad j=1,\ldots,D,                                     \\
f_2(\bw)
&\sim
\mathcal{GP}\!\left(
0,\tau^2 K_{\theta_2}(\bw,\bw')
\right).                                             
\end{align*}
For simplicity, both layers are assumed to have zero mean functions. Throughout this paper, we use squared-exponential kernels,
\begin{align}\label{eq:exkernel}
K_{\theta_{1j}}(\bx,\bx')
=
\exp\!\left\{
-\frac{\|\bx-\bx'\|^2}{\theta_{1j}}
\right\},         \qquad                               
K_{\theta_2}(\bw,\bw')
=
\exp\!\left\{
-\frac{\|\bw-\bw'\|^2}{\theta_2}
\right\},                                         
\end{align}
although other covariance functions, such as Mat\'ern kernels, can be
used within the same framework. Following \cite{sauer2023active}, we fix the first-layer process
variances at one to avoid confounding between the scale of the latent
representation and the second-layer correlation parameter $\theta_2$.
The second layer retains a process variance parameter $\tau^2$.

Accordingly, at the design inputs $\bX$,
\[
W_{\cdot j}=f_{1j}(\bX)
\overset{\mathrm{ind}}{\sim}
\mathcal N_n\!\left(
\mathbf 0,K_{\theta_{1j}}(\bX)
\right),
\qquad
[K_{\theta_{1j}}(\bX)]_{ik}
=
K_{\theta_{1j}}(\bx_i,\bx_k),
\]
and, conditional on $\bW$,
\[
Y\mid\bW
\sim
\mathcal N_n
\left(
\mathbf 0,
\tau^2\{K_{\theta_2}(\bW)+g\mathbf I_n\}
\right),
\qquad
[K_{\theta_2}(\bW)]_{ik}
=
K_{\theta_2}(\bw_i,\bw_k),
\]
where $g=\sigma_\epsilon^2/\tau^2$ is the nugget-to-process variance
ratio, a convenient parameterization for covariance estimation
\citep{gramacy2020surrogates}.

By the conditional independence of the latent processes, the joint density
factorizes as
\[
p(Y,\bW\mid\bX,\boldsymbol{\theta}_1,\theta_2,\tau^2,g)
=
p(Y\mid\bW,\theta_2,\tau^2,g)
\prod_{j=1}^D
p(W_{\cdot j}\mid\bX,\theta_{1j}),
\]
where
\(\boldsymbol{\theta}_1=(\theta_{11},\ldots,\theta_{1D})\).
Up to additive constants, the corresponding log-density terms are
\begin{align*}
\log p(\bW\mid\bX,\boldsymbol{\theta}_1)
&=
-\frac12\sum_{j=1}^D
\left\{
W_{\cdot j}^\top
K_{\theta_{1j}}(\bX)^{-1}
W_{\cdot j}
+
\log\left|K_{\theta_{1j}}(\bX)\right|
\right\},
\\
\log p(Y\mid\bW,\theta_2,\tau^2,g)
&=
-\frac{n}{2}\log\tau^2
-\frac12
\log\left|K_{\theta_2}(\bW)+g\mathbf I_n\right| 
-\frac{1}{2\tau^2}
Y^\top
\left\{K_{\theta_2}(\bW)+g\mathbf I_n\right\}^{-1}
Y.
\end{align*}
Exact Bayesian inference for DGPs requires marginalizing over the latent
representation. In particular, the marginal likelihood is obtained by
integrating out $\bW$:
\begin{align}
p(Y\mid\bX)
=
\int
p(Y\mid\bW)\,
p(\bW\mid\bX)\,
d\bW,
\label{eq:marginal likelihood}
\end{align}
where dependence on the covariance parameters is suppressed for notational
simplicity. This integral is generally intractable because of the nonlinear
dependence between layers and the high dimensionality of the latent
representation.

Similarly, exact Bayesian prediction of the underlying response
$f_*:=f(\bx_*)$ at a new input $\bx_*$ requires integrating over the
latent representations at both the training and test inputs. Let $\bw_*$ denote the latent representation corresponding to
$\bx_*$. The predictive distribution can be written as
\begin{equation}\label{eq:prediction_original}
p(f_*\mid\bx_*,\bX,Y)
=
\iint
p(f_*\mid\bw_*,\bW,Y)\,
p(\bw_*\mid\bx_*,\bX,\bW)\,
p(\bW\mid\bX,Y)\,
d\bw_*\,d\bW,
\end{equation}
again suppressing dependence on the covariance parameters for notational
simplicity. This integral is also generally intractable, motivating
approximate methods for both inference and prediction methods.

Existing approaches include variational inference (DSVI, \cite{damianou2013deep,salimbeni2017doubly}), stochastic-gradient
sampling methods (SGHMC, \cite{DGPHMC2018}), and MCMC-based approaches
(MCMC-VEC, \cite{sauer2023active,sauer2023vecchia}). These methods
make different trade-offs between posterior approximation and
computational cost. DSVI achieves efficient optimization through
inducing-point and variational approximations, SGHMC uses
stochastic-gradient sampling to approximate the posterior over inducing
variables, and MCMC-VEC retains a fully Bayesian treatment under a
Vecchia-approximated model but requires repeated posterior sampling.
In contrast, LaMP-DGP replaces posterior integration over the
training latent states with direct MAP optimization, avoiding both
variational approximation and posterior sampling during latent-state
estimation.

\section{Latent MAP Deep Gaussian Processes}\label{sec:LaMP-DGP}

Our proposed LaMP-DGP model is introduced in this section.  Let $\boldsymbol{\phi}=(\boldsymbol{\theta}_1,\theta_2,g)$ denote the
covariance parameters. Rather than fully integrating over the latent
representation $\bW$ in \eqref{eq:marginal likelihood}, we estimate $\bW$
by maximum a posteriori (MAP) while jointly optimizing the covariance
parameters:
\begin{align}
\label{eq:W MAP}
(\widehat{\bW},\widehat{\boldsymbol{\phi}})
=
\arg\max_{\bW,\boldsymbol{\phi}}
\left\{
\log \underbrace{p(\bW\mid\bX,\boldsymbol{\theta}_1)}
_{\text{first-layer GP prior}}
+
\log \underbrace{p(Y\mid\bW,\theta_2,\tau^2,g)}
_{\text{second-layer likelihood}}
\right\}.
\end{align}
For fixed $\bW$, $\theta_2$, and $g$, the second-layer process variance
$\tau^2$ can be profiled out using its maximum likelihood estimator
\begin{equation}\label{eq:profiletau}
\widehat{\tau}^2
=
\frac{1}{n}
Y^\top
\left(K_{\theta_2}(\bW)+g\mathbf I_n\right)^{-1}
Y.
\end{equation}
Substituting $\widehat{\tau}^2$ into \eqref{eq:W MAP}, the joint estimator
can equivalently be obtained by minimizing the scaled negative log-posterior $\cL_n(\bW,\boldsymbol{\phi})$:
\begin{equation}
\label{eq:objective}
\begin{aligned}
(\widehat{\bW},\widehat{\boldsymbol{\phi}})
&=
\arg\min_{\bW,\boldsymbol{\phi}}
\cL_n(\bW,\boldsymbol{\phi}),
\\
\cL_n(\bW,\boldsymbol{\phi})
&=
-\frac{1}{n}
\left(
\ell_1(\bW,\boldsymbol{\theta}_1)
+
\ell_2(\bW,\theta_2,g)
\right),
\end{aligned}
\end{equation}
where, up to additive constants,
\begin{equation}\label{eq:loglikelihood}
  \begin{aligned}
\ell_1(\bW,\boldsymbol{\theta}_1)
&=
-\frac{1}{2}
\sum_{j=1}^D
\left(
W_{\cdot j}^\top
K_{\theta_{1j}}(\bX)^{-1}
W_{\cdot j}
+
\log\left|K_{\theta_{1j}}(\bX)\right|
\right),
\\
\ell_2(\bW,\theta_2,g)
&=
-\frac{n}{2}
\log\!\left[
Y^\top
\left(K_{\theta_2}(\bW)+g\mathbf I_n\right)^{-1}
Y
\right]
-\frac{1}{2}
\log\left|
K_{\theta_2}(\bW)+g\mathbf I_n
\right|.
  \end{aligned}
\end{equation}

The objective in \eqref{eq:objective} balances the first-layer GP prior,
which favors latent representations that are compatible with the input
space, with the second-layer likelihood, which favors latent configurations
that explain the observed response. A key advantage of this formulation is
that it converts inference for the training latent states into a direct
optimization problem. Moreover, the gradients of
$\cL_n(\bW,\boldsymbol{\phi})$ are available in closed form, enabling
efficient gradient-based optimization without posterior sampling of the
training latent states. The required gradients are derived in the following
subsection.



\subsection{Gradient-Based Optimization}
The quantities optimized in LaMP-DGP are the latent variables $\bW$,
the first- and second-layer length-scale parameters
$\boldsymbol{\theta}_1$ and $\theta_2$, and the nugget $g$; the
second-layer process variance $\tau^2$ is profiled out in closed form as in \eqref{eq:profiletau}. For notational simplicity, define $\Sigma_{1j} := K_{{\theta}_{1j}}(\bX),\;\Sigma_{2} := K_{\theta_2}(\bW) + g \mathbf{I}_n$. The derivations are based on the squared-exponential kernel in \eqref{eq:exkernel}. We suppress the arguments of \(\ell_1(\bW,\boldsymbol{\theta}_1)\), \(\ell_2(\bW,\theta_2,g)\), and \(\cL_n(\bW,\boldsymbol{\phi})\). All partial derivatives are taken
with the remaining arguments held fixed. 

\subsubsection{Gradient with respect to $\bW$}
Since
\(
\cL_n
=
-\frac{1}{n}(\ell_1+\ell_2),
\)
the gradient with respect to $\bW$ decomposes as
\(
n\frac{\partial\cL_n}{\partial\bW}
=
-\frac{\partial\ell_1}{\partial\bW}
-\frac{\partial\ell_2}{\partial\bW}.
\)
For the first-layer contribution,
\(
-\frac{\partial\ell_1}{\partial W_{\cdot j}}
=
\Sigma_{1j}^{-1}W_{\cdot j}
\)
for $j=1,\ldots,D$.
For the second-layer contribution, define
\begin{align}
\mathbf B
:=
-\frac{\partial\ell_2}{\partial\Sigma_2}
=
\frac12
\left(
\Sigma_2^{-1}
-
\frac{
\Sigma_2^{-1}YY^\top\Sigma_2^{-1}
}{
\widehat{\tau}^{2}
}
\right),
\label{eq:l2_Sig}
\end{align}
and let $B_{ik}$ denote the $(i,k)$th element of $\mathbf B$.
Applying the chain rule element-wise gives
\begin{align*}
-\frac{\partial\ell_2}{\partial W_{ij}}
&=
\sum_{k=1}^n
B_{ik}
\frac{\partial[\Sigma_2]_{ik}}{\partial W_{ij}}
+
\sum_{k=1}^n
B_{ki}
\frac{\partial[\Sigma_2]_{ki}}{\partial W_{ij}} =
\frac{4}{\theta_2}
\sum_{k=1}^n
B_{ik}
[K_{\theta_2}(\bW)]_{ik}
(W_{kj}-W_{ij}).
\end{align*}
Define
\(
\mathbf A
:=
\mathbf B\circ K_{\theta_2}(\bW),
\)
where \(\circ\) denotes the Hadamard product. The second-layer
contribution can then be written compactly as
\(
-\frac{\partial \ell_2}{\partial \bW}
=
\frac{4}{\theta_2}
\left\{
\mathbf A\bW
-
\operatorname{diag}(\mathbf A\mathbf 1_n)\bW
\right\}.
\)
Combining the contributions from both layers gives
\[
\frac{\partial\cL_n}{\partial\bW}
=
\frac{1}{n}
\left[
\left(
\Sigma_{11}^{-1}W_{\cdot1},
\ldots,
\Sigma_{1D}^{-1}W_{\cdot D}
\right)
+
\frac{4}{\theta_2}
\left\{
\mathbf A\bW
-
\operatorname{diag}(\mathbf A\mathbf1_n)\bW
\right\}
\right].
\]

\subsubsection{Gradient with respect to hyperparameters}

Since $\theta_2$ appears only in $\ell_2$, its derivative is
\(
n\frac{\partial\cL_n}{\partial\theta_2}
=
-\frac{\partial\ell_2}{\partial\theta_2}
=
\operatorname{tr}\!\left(
\mathbf B^\top
\frac{\partial\Sigma_2}{\partial\theta_2}
\right),
\)
which gives
\[
\frac{\partial\cL_n}{\partial\theta_2}
=
\frac{1}{n}
\sum_{i=1}^n\sum_{k=1}^n
B_{ik}
[K_{\theta_2}(\bW)]_{ik}
\frac{\|\bw_i-\bw_k\|^2}{\theta_2^2}.
\]
Similarly, since
\(
\partial\Sigma_2/\partial g=\mathbf I_n,
\)
the gradient with respect to the nugget is
\(
\frac{\partial\cL_n}{\partial g}
=
\frac{1}{n}
\operatorname{tr}(\mathbf B).
\)


For the first-layer length-scale parameter $\theta_{1j}$, define
\[
\mathbf C_{1j}
:=
\frac12
\left(
\Sigma_{1j}^{-1}
-
\Sigma_{1j}^{-1}
W_{\cdot j}W_{\cdot j}^\top
\Sigma_{1j}^{-1}
\right).
\]
Then,
\(
n\frac{\partial\cL_n}{\partial\theta_{1j}}
=
\operatorname{tr}\!\left(
\mathbf C_{1j}^\top
\frac{\partial\Sigma_{1j}}{\partial\theta_{1j}}
\right)
\), which gives
\[
\frac{\partial\cL_n}{\partial\theta_{1j}}
=
\frac{1}{n}
\sum_{i=1}^n\sum_{k=1}^n
[\mathbf C_{1j}]_{ik}
[K_{\theta_{1j}}(\bX)]_{ik}
\frac{\|\bx_i-\bx_k\|^2}{\theta_{1j}^2}.
\]
The full gradient is obtained by stacking these scalar derivatives,
\(
\frac{\partial \cL_n}{\partial \boldsymbol{\theta}_1}
=
\left(
\frac{\partial\mathcal L_n}{\partial\theta_{11}},
\ldots,
\frac{\partial\mathcal L_n}{\partial\theta_{1D}}
\right)^\top.
\)
In implementation, the covariance parameters are optimized on the log
scale to preserve positivity; thus, for any positive parameter $\alpha$,
the corresponding gradient is obtained from
\(
\frac{\partial\cL_n}{\partial\log\alpha}
=
\alpha\frac{\partial\cL_n}{\partial\alpha}.
\)


\subsubsection{Adam optimizer}

Given the gradients with respect to the latent variables and covariance
parameters, we employ the Adam optimizer \citep{kingma2015adam}, allowing
different optimization hyperparameters for different parameter blocks.
Let
\(
\boldsymbol{\psi}
\in
\left\{
\bW,\boldsymbol{\theta}_1,\theta_2,g
\right\}
\)
denote a generic parameter block, and let
\(
\bh_{\boldsymbol{\psi},t}
=
\nabla_{\boldsymbol{\psi}}
\cL_n\!\left(\boldsymbol{\psi}^{(t)}\right)
\)
denote its gradient at iteration \(t\). Starting from
\(\bm_{\boldsymbol{\psi},0}=\boldsymbol 0\) and
\(\bv_{\boldsymbol{\psi},0}=\boldsymbol 0\), the Adam updates are
\begin{equation}
\begin{aligned}
\bm_{\boldsymbol{\psi},t}
=
\beta_{1,\boldsymbol{\psi}}
\bm_{\boldsymbol{\psi},t-1}
+
(1-\beta_{1,\boldsymbol{\psi}})
\bh_{\boldsymbol{\psi},t},&
\qquad
\bv_{\boldsymbol{\psi},t}
=
\beta_{2,\boldsymbol{\psi}}
\bv_{\boldsymbol{\psi},t-1}
+
(1-\beta_{2,\boldsymbol{\psi}})
\bh_{\boldsymbol{\psi},t}^{\,2},
\\
\boldsymbol{\psi}^{(t+1)}
=
\boldsymbol{\psi}^{(t)}
-
\alpha_{\boldsymbol{\psi},t}
\frac{
\widehat{\bm}_{\boldsymbol{\psi},t}
}{
\sqrt{\widehat{\bv}_{\boldsymbol{\psi},t}}
+
\epsilon_{\boldsymbol{\psi}}
};& \qquad \widehat{\bm}_{\boldsymbol{\psi},t}
=
\frac{\bm_{\boldsymbol{\psi},t}}
{1-\beta_{1,\boldsymbol{\psi}}^t},
\quad
\widehat{\bv}_{\boldsymbol{\psi},t}
=
\frac{\bv_{\boldsymbol{\psi},t}}
{1-\beta_{2,\boldsymbol{\psi}}^t}.
\end{aligned}
\label{eq:adam_update}
\end{equation}
Here, \(\alpha_{\boldsymbol{\psi},t}>0\) is the learning rate,
\(\beta_{1,\boldsymbol{\psi}},\beta_{2,\boldsymbol{\psi}}\in(0,1)\)
are the exponential-decay parameters for the first and second moments,
respectively, and \(\epsilon_{\boldsymbol{\psi}}>0\) is a numerical-stability
constant. All multiplication, division, powers, and square roots in
\eqref{eq:adam_update} are applied element-wise.
The learning rate and Adam hyperparameters may differ across parameter
blocks.

In practice, we randomly partition the observations into training and
validation sets using either a 90\%/10\% or an 80\%/20\% split. The training set is used to optimize the
latent states and covariance hyperparameters, while the validation set is
used to monitor out-of-sample predictive performance and select the best
iterate. Specifically, we retain the estimates from the epoch with the
smallest validation root mean square error (RMSE).

\subsection{Prediction}\label{sec:prediction}

After obtaining the estimates
$\widehat{\bW}$, $\hat{\boldsymbol{\theta}}_1$,
$\hat{\theta}_2$, and $\hat g$ from the gradient-based optimization,
together with the corresponding profiled variance estimate $\hat\tau^2$,
prediction proceeds by conditioning on these fitted quantities.
Specifically, the posterior distribution of the training latent
representation in \eqref{eq:prediction_original} is approximated by a
point mass at its MAP estimate,
\[
p(\bW\mid\bX,Y)
\approx
\delta_{\widehat{\bW}}(\bW).
\]
Consequently,
\begin{equation}\label{eq:prediction_MAP}
p(f_*\mid\bx_*,\bX,Y)
\approx
\int
p(f_*\mid\bw_*,\widehat{\bW},Y)\,
p(\bw_*\mid\bx_*,\bX,\widehat{\bW})\,
d\bw_*,    
\end{equation}
where the estimated covariance parameters are suppressed for notational
simplicity. Thus, LaMP-DGP replaces uncertainty in the training latent
representation by its MAP estimate while retaining and propagating
uncertainty in the test latent representation $\bw_*$. Extensions that account for uncertainty in the training latent states are discussed in Section \ref{sec:conclusion}.

Conditional on the fitted model, prediction therefore proceeds
sequentially through the two GP layers. The first layer provides a
Gaussian predictive distribution for each latent coordinate $w_j^*$,
$j=1,\ldots,D$, and the resulting latent representation $\bw_*$ is then
passed through the second-layer GP. Specifically,
\begin{equation}\label{eq:predictive_w}
\begin{aligned}
w_j^*
&\mid \bx_*,\bX,\widehat{\bW},
\hat\theta_{1j}
\overset{\mathrm{ind}}{\sim}
\mathcal N\!\left(
    \mu_{w_j}(\bx_*),
    \sigma_{w_j}^2(\bx_*)
\right),\quad\text{for}\quad j=1,\ldots,D,\\
f_*
&\mid\bw_*,\widehat{\bW},Y,
\hat\theta_2,\hat\tau^2,\hat g
\sim
\mathcal N\!\left(
    \mu_f(\bw_*),
    \sigma_f^2(\bw_*)
\right).
\end{aligned}
\end{equation}
where
\[
\begin{aligned}
\mu_{w_j}(\bx_*)
&=
\bk_{\hat\theta_{1j}}(\bx_*,\bX)
K_{\hat\theta_{1j}}(\bX)^{-1}
\widehat W_{\cdot j},
\\\sigma_{w_j}^2(\bx_*)
&=
1-
\bk_{\hat\theta_{1j}}(\bx_*,\bX)
K_{\hat\theta_{1j}}(\bX)^{-1}
\bk_{\hat\theta_{1j}}(\bX,\bx_*),
\\
\mu_f(\bw_*)
&=
\bk_{\hat\theta_2}(\bw_*,\widehat{\bW})
\left\{
K_{\hat\theta_2}(\widehat{\bW})
+\hat g\mathbf I_n
\right\}^{-1}Y,
\\
\sigma_f^2(\bw_*)
=
\hat\tau^2
&\left[
1
-
\bk_{\hat\theta_2}(\bw_*,\widehat{\bW})
\left\{
K_{\hat\theta_2}(\widehat{\bW})
+\hat g\mathbf I_n
\right\}^{-1}
\bk_{\hat\theta_2}(\widehat{\bW},\bw_*)
\right].
\end{aligned}
\]
with
\(
\bk_{\hat\theta_{1j}}(\bx_*,\bX)
=
\left(
K_{\hat\theta_{1j}}(\bx_*,\bx_1),
\ldots,
K_{\hat\theta_{1j}}(\bx_*,\bx_n)
\right),
\)
and
$\bk_{\hat\theta_{1j}}(\bX,\bx_*)
=
\bk_{\hat\theta_{1j}}(\bx_*,\bX)^\top$;
the second-layer covariance vectors are defined analogously.

Although the conditional distributions in \eqref{eq:predictive_w} are
Gaussian, the marginal predictive distribution of $f^*$ in
\eqref{eq:prediction_MAP} is generally non-Gaussian because both
$\mu_f(\bw_*)$ and $\sigma_f^2(\bw_*)$ depend nonlinearly on the random
latent representation $\bw_*$. Nevertheless, using the linked-GP moment
calculations of \citet{Kyzyurova2018Berger} and \citet{ming2021linked}, the first two moments of this
predictive distribution can be obtained in closed form. We therefore
approximate the marginal predictive distribution by a Gaussian distribution
with the corresponding mean and variance. Among Gaussian distributions,
this moment-matched approximation minimizes the forward
Kullback--Leibler divergence from the true predictive distribution,
providing a convenient and computationally efficient approximation.

Specifically, following \citet{ming2021linked}, under the
squared-exponential kernel the predictive mean is
\begin{align*}
\mathbb E[f_*\mid\bx_*,\bX,Y]
&=
\mathbb E_{\bw_*\mid\bx_*,\bX,\widehat{\bW}}
\!\left[
\mu_f(\bw_*)
\right]=
\overline{\bk}_*(\bx_*)
\left\{
K_{\hat\theta_2}(\widehat{\bW})
+\hat g\mathbf I_n
\right\}^{-1}Y.
\end{align*}
where
\(
\overline{\bk}_*(\bx_*)
:=
\mathbb E_{\bw_*\mid\bx_*,\bX,\widehat{\bW}}
\left[
\bk_{\hat\theta_2}(\bw_*,\widehat{\bW})
\right].
\)
For each $i=1,\ldots,n$, the $i$th element of this vector has the
closed-form expression:
\begin{align*}
[\overline{\bk}_*(\bx_*)]_i
&=
\mathbb E_{\bw_*\mid\bx_*,\bX,\widehat{\bW}}\!\left[
    \exp\!\left(
        -\frac{\|\bw_*-\widehat{\bw}_i\|^2}
        {\hat\theta_2}
    \right)
\right] \\
&=
\left|
\mathbf I_D
+\frac{2\Sigma_w(\bx_*)}{\hat\theta_2}
\right|^{-1/2}
\exp\!\left[
-(\widehat{\bw}_i-\boldsymbol\mu_w(\bx_*))^\top
(\hat\theta_2\mathbf I_D+2\Sigma_w(\bx_*))^{-1}
(\widehat{\bw}_i-\boldsymbol\mu_w(\bx_*))
\right],
\end{align*}
where $\widehat{\bw}_i^\top$ denotes the $i$th row of
$\widehat{\bW}$,
\(
\boldsymbol\mu_w(\bx_*)
=
\bigl(
\mu_{w_1}(\bx_*),\ldots,\mu_{w_D}(\bx_*)
\bigr)^\top\), and
\(\Sigma_w(\bx_*)
=
\operatorname{diag}\!\left(
\sigma_{w_1}^2(\bx_*),\ldots,
\sigma_{w_D}^2(\bx_*)
\right).
\)
For the predictive variance, the law of total variance gives
\begin{align*}
\operatorname{Var}&(f_*\mid\bx_*,\bX,Y)
=
\mathbb E_{\bw_*\mid\bx_*,\bX,\widehat{\bW}}
\left[
\sigma_f^2(\bw_*)
\right]
+
\operatorname{Var}_{\bw_*\mid\bx_*,\bX,\widehat{\bW}}
\left[
\mu_f(\bw_*)
\right]
\\
&=
\hat\tau^2
\left[
1-
\operatorname{tr}\!\left(
\left\{
K_{\hat\theta_2}(\widehat{\bW})
+\hat g\mathbf I_n
\right\}^{-1}
\mathbf H(\bx_*)
\right)
\right]
\\
&\qquad+
Y^\top
\left\{
K_{\hat\theta_2}(\widehat{\bW})
+\hat g\mathbf I_n
\right\}^{-1}
\left[
\mathbf H(\bx_*)
-
\overline{\bk}_*(\bx_*)^\top
\overline{\bk}_*(\bx_*)
\right]
\left\{
K_{\hat\theta_2}(\widehat{\bW})
+\hat g\mathbf I_n
\right\}^{-1}
Y,
\end{align*}
where
\(
\mathbf H(\bx_*)
:=
\mathbb E_{\bw_*\mid\bx_*,\bX,\widehat{\bW}}
\left[
\bk_{\hat\theta_2}(\widehat{\bW},\bw_*)
\bk_{\hat\theta_2}(\bw_*,\widehat{\bW})
\right],
\)
with its $(i,j)$th element given by
\begin{align*}
[\mathbf H(\bx_*)]_{ij}
&=
\left|
\mathbf I_D+
\frac{4\Sigma_w(\bx_*)}{\hat\theta_2}
\right|^{-1/2}
\exp\!\left(
-\frac{
\|\widehat{\bw}_i-\widehat{\bw}_j\|^2
}{
2\hat\theta_2
}
\right)
\\
&\quad\times
\exp\!\left[
-2
\left(
\boldsymbol\mu_w(\bx_*)
-\frac{\widehat{\bw}_i+\widehat{\bw}_j}{2}
\right)^\top
\left(
\hat\theta_2\mathbf I_D+4\Sigma_w(\bx_*)
\right)^{-1}
\left(
\boldsymbol\mu_w(\bx_*)
-\frac{\widehat{\bw}_i+\widehat{\bw}_j}{2}
\right)
\right].
\end{align*}

These closed-form expressions eliminate the need for Monte Carlo approximation of the predictive mean and variance by analytically integrating over the test latent representation $\bw_*$, leading to efficient prediction. The full derivation can be found in \cite{ming2021linked}.

\section{Mini-Batch Latent MAP Deep Gaussian Processes}
\label{sec:LaMP-DGP SGD}

The gradient-based optimization for LaMP-DGP developed in
Section~\ref{sec:LaMP-DGP} is computationally efficient for moderate sample sizes. For larger datasets, however, each full-data gradient evaluation requires constructing and factorizing dense kernel matrices, which entails $\mathcal{O}(n^2)$ memory and $\mathcal{O}(n^3)$ computational cost. To alleviate this bottleneck, we develop a mini-batch optimization strategy that evaluates the LaMP-DGP objective and its gradients using subsets of the observations. We also study the effect of mini-batch bias on optimization and use the resulting analysis to motivate a progressively increasing mini-batch schedule.

\subsection{Mini-Batch Optimization Algorithm}

The mini-batch procedure for LaMP-DGP is summarized in
Algorithm~\ref{algo:mini-batch LaMP-DGP}. For a subset
$S\subset\{1,\ldots,n\}$ with $|S|=m$, define the mini-batch objective
\[
\cL_S(\bW_S,\boldsymbol{\phi})
=
-\frac{1}{m}
\left\{
\ell_{1,S}(\bW_S,\boldsymbol{\theta}_1)
+
\ell_{2,S}(\bW_S,\theta_2,g)
\right\},
\]
where
\(
\boldsymbol{\phi}
=
(\boldsymbol{\theta}_1,\theta_2,g)
\),
and $\ell_{1,S}$ and $\ell_{2,S}$ are obtained from the corresponding full-data quantities by restricting $\bX$, $Y$, and $\bW$ to the observations indexed by $S$.

At epoch $t$, the training observations are randomly partitioned into mini-batches of size $m_t$. For each mini-batch $S_{t,b}$, the objective $\cL_{S_{t,b}}$ is used to compute gradients with respect to only the corresponding latent rows $\bW_{S_{t,b}}$. The associated Adam moments and update counters are updated for these rows, while the remaining rows of $\bW$ are left unchanged. Because the mini-batches form a partition of the training data, every latent state is updated exactly once per epoch.

Each mini-batch involves covariance matrices of dimension $m_t$, resulting in a dominant dense linear-algebra cost per mini-batch is
$\mathcal O(m_t^3)$ for covariance factorization, with
$\mathcal O(m_t^2)$ memory, compared with $\mathcal O(n^3)$ computation
and $\mathcal O(n^2)$ memory for a full-data update. With approximately
$n/m_t$ mini-batches per epoch, this gives an aggregate factorization
cost of $\mathcal O(nm_t^2)$ per covariance matrix per epoch, reducing
both computational cost and memory requirements relative to full-data
optimization when $m_t\ll n$.

The covariance hyperparameters are updated every $q_\phi$ epochs. At each such update, a mini-batch of fixed size $m_\phi$ is sampled, and the gradients with respect to $\boldsymbol{\phi}$ are evaluated while holding the current latent states fixed. The positive covariance parameters are then updated on the log scale using Adam. In contrast, the latent-state mini-batch size $m_t$ is allowed to vary across epochs. In Section~\ref{sec:analysis}, we motivate a progressively increasing schedule for $m_t$ that reduces the bias of the mini-batch gradients.

Because prediction using the entire training set can be expensive, validation RMSE is evaluated every \(q_{\mathrm{val}}=10\) epochs using a fixed subset of \(m_{\mathrm{pred}}\) training observations to construct predictions for the validation set. This reduced prediction set is used only to monitor validation performance and select the fitted model; it does not affect the optimization updates. Among the validation checkpoints, we retain the latent states and covariance parameters corresponding to the smallest validation RMSE.

Specific choices of the batch sizes, update frequencies, prediction subset size, and optimizer settings used in the numerical studies are provided in Appendix~\ref{appendix:optimizer_settings}.


\begin{algorithm}[tbh]
\caption{Mini-Batch LaMP-DGP}
\label{algo:mini-batch LaMP-DGP}
\setstretch{1.05}
\begin{algorithmic}[1]

\State Initialize $\bW^{(0)}$ and
       $\boldsymbol{\phi}^{(0)}
       =
       (\boldsymbol{\theta}_1^{(0)},\theta_2^{(0)},g^{(0)})$

\State Initialize Adam moments and update counters to zero

\State Set latent mini-batch schedule $\{m_t\}_{t=1}^T$,
       hyperparameter batch size $m_\phi$,
       hyperparameter update frequency $q_\phi$,
       and validation frequency $q_{\mathrm{val}}=10$

\State Select a fixed prediction subset
       $S_{\mathrm{pred}}\subset\{1,\ldots,n\}$,
       with $|S_{\mathrm{pred}}|=m_{\mathrm{pred}}$,
       for validation

\State Set $r_{\mathrm{best}}\gets\infty$

\For{$t=1,\ldots,T$}
    \State Randomly partition $\{1,\ldots,n\}$ into
           $\mathcal P_t=\{S_{t,1},\ldots,S_{t,B_t}\}$,
           where $B_t=\lceil n/m_t\rceil$
    \State Set $\bW^{(t,0)}\gets\bW^{(t-1)}$

    \For{$b=1,\ldots,B_t$}

        \State Compute $\bh_{t,b}^{W}
        =
        \nabla_{\bW_{S_{t,b}}}
        \cL_{S_{t,b}}
        \left(
            \bW_{S_{t,b}}^{(t,b-1)};
            \boldsymbol{\phi}^{(t-1)}
        \right)$

        \State Set
        $\bW^{(t,b)}\gets\bW^{(t,b-1)}$

        \State Update
        $\bW_{S_{t,b}}^{(t,b)}$
        and the corresponding Adam moments using
        $\bh_{t,b}^{W}$

    \EndFor

    \State Set
    $\bW^{(t)}\gets\bW^{(t,B_t)}$

    \If{$t \bmod q_\phi = 0$}
    \State Sample $H_t\subset\{1,\ldots,n\}$ with $|H_t|=m_\phi$
    \State Compute $\bh_t^\phi
    =
    \nabla_{\boldsymbol{\phi}}
    \cL_{H_t}
    \left(
        \bW_{H_t}^{(t)};
        \boldsymbol{\phi}^{(t-1)}
    \right)$
    \State Apply an Adam update on the log scale to obtain
    $\boldsymbol{\phi}^{(t)}$
\Else
    \State Set
    $\boldsymbol{\phi}^{(t)}
    \gets
    \boldsymbol{\phi}^{(t-1)}$
\EndIf

\If{$t \bmod q_{\mathrm{val}} = 0$}
    \State Compute validation RMSE $r_t$ using
           $(\bX_{S_{\mathrm{pred}}},
             Y_{S_{\mathrm{pred}}},
             \bW_{S_{\mathrm{pred}}}^{(t)})$
           for prediction
    \If{$r_t<r_{\mathrm{best}}$}
            \State Set $r_{\mathrm{best}}\gets r_t$ and 
$(\widehat{\bW},\widehat{\boldsymbol{\phi}})
            \gets
            (\bW^{(t)},\boldsymbol{\phi}^{(t)})$
        \EndIf

    \EndIf

\EndFor

\State \Return
$(\widehat{\bW},\widehat{\boldsymbol{\phi}})$

\end{algorithmic}
\end{algorithm}

A key distinction from conventional stochastic optimization is that the
mini-batch gradients in LaMP-DGP are generally biased. Such bias is not
necessarily detrimental: biased gradient
estimators may be preferred when they offer computational advantages,
provided that the bias is sufficiently controlled
\citep{ajalloeian2020convergence,driggs2022biased}.
For additive empirical losses, appropriately sampled and scaled mini-batch
gradients are unbiased estimators of the full gradient. In contrast, the GP
likelihood terms in \eqref{eq:loglikelihood} depend jointly on observations
through covariance matrices and cannot be decomposed into independent
per-observation contributions. A mini-batch uses only the covariance submatrices for the selected observations, omitting interactions with observations outside the batch. Its gradient therefore generally differs systematically from the full-data gradient.

We therefore study how mini-batch gradient bias affects convergence to stationarity of the full-data objective. Our analysis builds on results for stochastic optimization with biased gradients \citep{ajalloeian2020convergence}, mini-batch estimation in GP models \citep{chen2022gaussian}, and nonconvex Adam under unbiased stochastic gradients \citep{JMLR:Adam_nonconvex}. These results do not directly apply to Adam updates based on biased latent-state gradients in a DGP. We therefore impose a uniform bias-decay condition with respect to the mini-batch size and establish a stationarity guarantee under an additional assumption that the expected Adam update direction remains sufficiently aligned with the full-data gradient.

\subsection{Setup for Convergence Analysis}\label{sec:setup}

Our theoretical analysis focuses on the latent-state updates, treating the
covariance hyperparameters $\boldsymbol{\phi}$ as fixed. This isolates the
main challenge of interest, namely optimization of the high-dimensional
latent representation $\bW$. 

Let $\mathcal P=\{S_1,\ldots,S_B\}$ denote a partition of
$\{1,\ldots,n\}$ with target mini-batch size $m$. For each
$S\in\mathcal P$, let $R_S$ denote the row-selection operator such that
$R_S\bW=\bW_S$, so that $R_S^\top$ embeds a batch gradient back into the
corresponding rows of an $n\times D$ matrix. We define the assembled
mini-batch gradient by
\[
G_{n,m}(\bW;\boldsymbol{\phi},\mathcal P)
=
\sum_{S\in\mathcal P}
R_S^\top
\nabla_{\bW_S}
\cL_S(\bW_S;\boldsymbol{\phi}).
\]
Thus, each row of $G_{n,m}$ is obtained from the mini-batch containing
the corresponding observation, whereas the full-data gradient
$\nabla_{\bW}\cL_n(\bW;\boldsymbol{\phi})$ is computed from the complete
covariance structure.

At epoch $t$, the assembled mini-batch gradient becomes
\[
G_t
:=
G_{n,m_t}
\left(
\bW^{(t-1)};
\boldsymbol{\phi},
\mathcal P_t
\right)
=
\sum_{b=1}^{B_t}
R_{S_{t,b}}^\top
\bh_{t,b}^{W}.
\]
Because the sets $S_{t,1},\ldots,S_{t,B_t}$ form a disjoint partition,
each row of $\bW$ is updated exactly once during the epoch. Moreover,
$\cL_{S_{t,b}}$ depends only on the observations and latent states indexed
by $S_{t,b}$, so updates from earlier batches do not affect subsequent batch gradients. With Adam hyperparameters and learning rates shared across latent rows, and with Adam states and counters updated only for the selected rows, the sequential mini-batch updates are therefore row-wise equivalent to an epoch-level Adam update driven by  $G_t$. This representation allows us to treat each epoch as a single stochastic update and compare $G_t$ directly with the full-data gradient.

For a random partition $\mathcal P$ generated with target mini-batch size
$m$, define the mini-batch gradient bias as
\begin{equation*}
b_{n,m}(\bW;\boldsymbol{\phi})
=
\mathbb E_{\mathcal P}
\left[
G_{n,m}(\bW;\boldsymbol{\phi},\mathcal P)
\right]
-
\nabla_{\bW}\cL_n(\bW;\boldsymbol{\phi}),
\end{equation*}
where the expectation is taken over the random partition with $\bW$ and
$\boldsymbol{\phi}$ held fixed.
When $m=n$, the partition consists of the full dataset, so that $b_{n,n}(\bW;\boldsymbol{\phi})=0$.

\subsection{Convergence Analysis}\label{sec:analysis}

We adopt the random partitioning scheme described in
Section~\ref{sec:setup}. At each epoch, the partition is generated
independently of the optimization history. Let
$\{m_t\}_{t\ge1}$ be a prescribed deterministic mini-batch-size schedule
and define
\(
\mathcal F_t
=
\sigma\!\left(
\bW^{(0)},\mathcal P_1,\ldots,\mathcal P_{t-1}
\right).
\)
Thus, $\bW^{(t-1)}$ and the Adam states at the start of epoch $t$ are
$\mathcal F_t$-measurable, while $\mathcal P_t$ is independent of
$\mathcal F_t$.

Additional regularity conditions concerning boundedness, smoothness, lower boundedness, covariance nonsingularity, and stochastic-gradient moments are provided in Appendix~\ref{appendix:conditions}. We state below the key conditions governing mini-batch gradient bias decay and the expected alignment of the Adam update direction with the full-data gradient.

\begin{assumption}
\label{as:batch_bias}
There exist constants $C_b>0$, $\gamma>0$, and an integer $m_0\ge1$, independent of $n$, such that, for every pair of integers $n,m$ satisfying $m_0\le m\le n$,
\[
\sup_{\substack{
\bW\in\mathcal W_n\\
\boldsymbol{\phi}\in\mathcal H
}}
\frac{1}{\sqrt{nD}}
\left\|
b_{n,m}(\bW;\boldsymbol{\phi})
\right\|_F
\le
C_b m^{-\gamma},
\]
where $\mathcal W_n$ and $\mathcal H$ are the latent-state and
hyperparameter spaces, respectively, defined in
Assumption~\ref{as:bound_par}.
\end{assumption}

Assumption~\ref{as:batch_bias} imposes a polynomially decaying upper bound on the root-mean-square bias across latent coordinates, uniformly over $\bW\in\mathcal W_n$ and $\boldsymbol{\phi}\in\mathcal H$. This form is motivated by existing results for mini-batch GP optimization by existing results for mini-batch GP optimization; Specifically, \citet{chen2022gaussian} obtain a statistical error of order \(m^{-1/2}\) under exponential or polynomial kernel eigendecay. Because the present DGP setting additionally involves optimization over latent states that enter the second-layer covariance matrix, we retain a general exponent $\gamma>0$ rather than imposing a specific decay rate.



For the epoch gradient, define \(
b_t
:=
b_{n,m_t}
\left(
\bW^{(t-1)};\boldsymbol{\phi}
\right)=\mathbb E[G_t\mid\mathcal F_t]
-
\nabla_{\bW}\cL_n
\left(
\bW^{(t-1)};\boldsymbol{\phi}
\right)\). We apply the Adam recursion in \eqref{eq:adam_update} with
$\boldsymbol{\psi}=\bW$ and $\bh_{\bW,t}=G_t$. Define the Adam search
direction
\[
\bd_t
=
\frac{\widehat{\bm}_{\bW,t}}
{\sqrt{\widehat{\bv}_{\bW,t}}+\epsilon_{\bW}},
\]
so that
\(
\bW^{(t)}
=
\bW^{(t-1)}
-
\alpha_{\bW,t}\bd_t.
\)
Because $\bd_t$ depends on the current stochastic gradient as well as the
accumulated Adam moments, controlling the bias of $G_t$ alone is not
sufficient to guarantee descent of the full-data objective. We therefore
assume that, in expectation, the Adam direction remains sufficiently
aligned with the full-data gradient, up to an error determined by the
mini-batch bias.

\begin{assumption}
\label{as:adam_alignment}
There exist constants $c_0>0$ and $c_b\ge0$ such that, for every $t$,
\[
\frac{1}{nD}
\mathbb E\!\left[
\left\langle
\nabla_{\bW}\cL_n
\left(
\bW^{(t-1)};\boldsymbol{\phi}
\right),
\bd_t
\right\rangle_F
\middle|\mathcal F_t
\right]
\ge
\frac{c_0}{nD}
\left\|
\nabla_{\bW}\cL_n
\left(
\bW^{(t-1)};\boldsymbol{\phi}
\right)
\right\|_F^2
-
\frac{c_b}{nD}
\|b_t\|_F^2
\]
almost surely.
\end{assumption}

Because the optimization objective is nonconvex, we use the gradient norm as a measure of first-order stationarity and establish the following bound.

\begin{theorem}
\label{thm:biased_adam}

Fix $n$ and $\boldsymbol{\phi}\in\mathcal H$. Suppose that
Assumptions~\ref{as:batch_bias}, \ref{as:adam_alignment}, and \ref{as:bound_par}--\ref{as:bounded_second_moment} hold. For a prescribed horizon $T$, let
$\{\bW^{(t)}\}_{t=0}^{T}$ be generated by the Adam recursion in
\eqref{eq:adam_update} with
\(
\boldsymbol{\psi}=\bW,
\bh_{\bW,t}=G_t\)
and 
\(\alpha_{\bW,t}=\frac{\alpha}{\sqrt T}\) for $t=1,\ldots,T$,
where $\alpha>0$,
$0\le\beta_{1,\bW},\beta_{2,\bW}<1$,
$\epsilon_{\bW}>0$, and
$\bm_{\bW,0}=\bv_{\bW,0}=\boldsymbol 0$.
Suppose also that $m_0\le m_t\le n$ for all $t$. Then
\begin{align}
\min_{0\le t\le T-1}
\mathbb E\left[
\frac{1}{nD}
\left\|
\nabla_{\bW}\cL_n
\left(\bW^{(t)};\boldsymbol{\phi}\right)
\right\|_F^2
\right]
&\le
\frac{C_1}{\sqrt T}
+
\frac{c_b}{c_0T}
\sum_{t=1}^T
\mathbb E\left[
\frac{1}{nD}\|b_t\|_F^2
\right]
\notag\\
&\le
\frac{C_1}{\sqrt T}
+
\frac{C_2}{T}
\sum_{t=1}^T
m_t^{-2\gamma},
\label{eq:convergence_bound}
\end{align}
where  $C_1$ and $C_2$ are independent of $T$.


\end{theorem}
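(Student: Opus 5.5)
The plan is a standard descent-lemma argument for nonconvex smooth optimization, with Assumption~\ref{as:adam_alignment} playing the role that unbiasedness plays in classical SGD analyses. We write $\nabla_t:=\nabla_{\bW}\cL_n(\bW^{(t-1)};\boldsymbol{\phi})$ and $\alpha_t=\alpha/\sqrt T$. From the smoothness condition in the appendix (Lipschitz gradient of $\cL_n$ in $\bW$ over $\mathcal W_n$, constant $L$), the update $\bW^{(t)}=\bW^{(t-1)}-\alpha_t\bd_t$ gives
\[
\cL_n(\bW^{(t)})\le \cL_n(\bW^{(t-1)})-\alpha_t\langle \nabla_t,\bd_t\rangle_F+\tfrac{L}{2}\alpha_t^2\|\bd_t\|_F^2 .
\]
We then take conditional expectation given $\mathcal F_t$ and apply Assumption~\ref{as:adam_alignment} to the inner product. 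This yields
\[
\mathbb E[\cL_n(\bW^{(t)})\mid\mathcal F_t]\le \cL_n(\bW^{(t-1)})-\alpha_t c_0\|\nabla_t\|_F^2+\alpha_t c_b\|b_t\|_F^2+\tfrac{L}{2}\alpha_t^2\mathbb E[\|\bd_t\|_F^2\mid\mathcal F_t].
\]

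The second step is a uniform bound $\mathbb E\|\bd_t\|_F^2\le nD\,C_d$. The bounded-second-moment assumption on $G_t$ gives $\mathbb E\|G_s\|_F^2\le nD\,\sigma^2$ for every $s$. Since $\widehat{\bm}_{\bW,t}$ is a convex combination of $G_1,\ldots,G_t$ (the weights $(1-\beta_1)\beta_1^{t-s}/(1-\beta_1^t)$ sum to one), Jensen gives $\mathbb E\|\widehat{\bm}_{\bW,t}\|_F^2\le nD\sigma^2$. The denominator satisfies $\sqrt{\widehat{\bv}}+\epsilon_{\bW}\ge\epsilon_{\bW}$ elementwise, so $\|\bd_t\|_F\le\|\widehat{\bm}_{\bW,t}\|_F/\epsilon_{\bW}$ and hence $C_d=\sigma^2/\epsilon_{\bW}^2$. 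No coupling between numerator and denominator needs to be controlled, because the alignment assumption has already absorbed that difficulty.

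The third step is telescoping. We take total expectations, sum over $t=1,\ldots,T$, and use the lower bound $\cL_n\ge\cL^\ast$ from the appendix. Dividing by $\alpha_t c_0 T\,nD$ gives
\[
\frac1T\sum_{t=1}^T\mathbb E\Big[\tfrac{1}{nD}\|\nabla_t\|_F^2\Big]\le \frac{\cL_n(\bW^{(0)})-\cL^\ast}{nD\,c_0\alpha\sqrt T}+\frac{L\alpha C_d}{2c_0\sqrt T}+\frac{c_b}{c_0T}\sum_{t=1}^T\mathbb E\Big[\tfrac1{nD}\|b_t\|_F^2\Big].
\]
The minimum over $0\le t\le T-1$ is at most this average, since $\nabla_t$ is evaluated at $\bW^{(t-1)}$. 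This gives the first inequality with $C_1=\{\cL_n(\bW^{(0)})-\cL^\ast\}/(nD c_0\alpha)+L\alpha C_d/(2c_0)$, which is independent of $T$ because $n$ is fixed. For the second inequality, the bounded-parameter assumption ensures $\bW^{(t-1)}\in\mathcal W_n$. Assumption~\ref{as:batch_bias} together with $m_0\le m_t\le n$ then gives $\tfrac1{nD}\|b_t\|_F^2\le C_b^2m_t^{-2\gamma}$ almost surely, so we may take $C_2=c_bC_b^2/c_0$.

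The main obstacle is the legitimacy of the descent lemma along the iterates. The smoothness constant $L$ is only available on $\mathcal W_n$, where the covariance matrices are nonsingular, so we need the segment between $\bW^{(t-1)}$ and $\bW^{(t)}$ to stay in that region. The plan is to lean on the appendix assumptions, that is, boundedness of the iterates and a convex $\mathcal W_n$ with uniformly bounded eigenvalues of $\Sigma_2^{-1}$. If those assumptions do not directly cover the segment, we would invoke a projection argument or enlarge $\mathcal W_n$ to a convex neighborhood. A secondary care point is measurability in Assumption~\ref{as:adam_alignment}. Here $\bd_t$ depends on $\mathcal P_t$ while $\nabla_t$ is $\mathcal F_t$-measurable, so the tower property applies cleanly.
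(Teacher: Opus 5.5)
Your proposal is correct and follows the paper's proof essentially step for step: the descent lemma, Assumption~\ref{as:adam_alignment} in place of unbiasedness, the Jensen-plus-$\epsilon_{\bW}$ bound on $\bd_t$ (the paper's Lemma~\ref{lem:adam_direction_bound}), telescoping against $\cL_\star$, and Assumption~\ref{as:batch_bias} giving $C_2=c_bC_b^2/c_0$, with the same $C_1$ (the paper writes $\mathbb E[\cL_n(\bW^{(0)};\boldsymbol{\phi})]$ to allow a random initialization). The obstacle you flag is moot. $\mathcal W_n$ is a product of Euclidean balls and hence convex, so the segment between consecutive iterates stays in $\mathcal W_n$. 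All covariance matrices are also nonsingular everywhere because of the jitter $\varepsilon>0$ and $g\ge g_{\min}>0$.
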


Theorem~\ref{thm:biased_adam} bounds the minimum expected
squared full-data gradient norm among
$\bW^{(0)},\ldots,\bW^{(T-1)}$. The bound separates an
$\mathcal O(T^{-1/2})$ optimization term from a term controlling
the effect of mini-batch gradient bias. In particular, the first bound in
\eqref{eq:convergence_bound} shows that if
\[
\frac1T\sum_{t=1}^{T}
\mathbb E\!\left[
\frac{1}{nD}\|b_t\|_F^2
\right]
\longrightarrow 0,
\]
then
\[
\min_{0\le t\le T-1}
\mathbb E\!\left[
\frac{1}{nD}
\left\|
\nabla_{\bW}\cL_n
(\bW^{(t)};\boldsymbol{\phi})
\right\|_F^2
\right]
\longrightarrow 0.
\]
The proof is provided in Appendix~\ref{appendix: proof}.
The following corollary specifies a sufficient batch-growth schedule
for retaining the $\mathcal O(T^{-1/2})$ stationarity rate.




\begin{corollary}
\label{cor:biased_adam_rate}
Suppose the conditions of Theorem~\ref{thm:biased_adam} hold. Consider a sequence of dataset sizes $n$ and optimization horizons $T$ for which
\(
m_t\ge c_m t^a
\)
for $t=1,\ldots,T$,
where $c_m>0$ and $a\ge1/(4\gamma)$ are fixed.
If $C_1$ and $C_2$ remain bounded uniformly along
this sequence, then
\[
\min_{0\le t\le T-1}
\mathbb E\!\left[\frac{1}{nD}
\left\|
\nabla_{\bW}\cL_n
(\bW^{(t)};\boldsymbol{\phi})
\right\|_F^2
\right]
\leq
\frac{C_1 + 2C_2 c_m^{-2\gamma}}{\sqrt{T}}
=
\mathcal O(T^{-1/2}).
\]
\end{corollary}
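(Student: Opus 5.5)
We obtain the corollary by substituting the batch-size schedule into the second inequality of \eqref{eq:convergence_bound} in Theorem~\ref{thm:biased_adam} and bounding the resulting sum by an integral. Since $m_t\ge c_m t^a$ with $a\ge 1/(4\gamma)$ and $\gamma>0$, we have
\[
m_t^{-2\gamma}\le c_m^{-2\gamma}t^{-2a\gamma}\le c_m^{-2\gamma}t^{-1/2},\qquad t\ge1.
\]
The last step uses $2a\gamma\ge 1/2$ and $t\ge1$, so that $t^{-2a\gamma}\le t^{-1/2}$.

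Next, we control the sum by comparison with an integral. Because $t^{-1/2}$ is decreasing,
\[
\sum_{t=1}^T t^{-1/2}\le 1+\int_1^T s^{-1/2}\,ds = 2\sqrt T-1\le 2\sqrt T.
\]
Therefore
\[
\frac{C_2}{T}\sum_{t=1}^T m_t^{-2\gamma}\le \frac{2C_2c_m^{-2\gamma}}{\sqrt T}.
\]
Combining this with the $C_1/\sqrt T$ term in \eqref{eq:convergence_bound} gives the stated bound $(C_1+2C_2c_m^{-2\gamma})/\sqrt T$.

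The only point that needs care is that the constants be uniform along the sequence of pairs $(n,T)$. This uniformity is exactly what the hypothesis supplies: $C_1$ and $C_2$ are assumed bounded along the sequence, and $c_m,\gamma$ are fixed. The numerator is therefore uniformly bounded, which gives the $\mathcal O(T^{-1/2})$ rate.

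We must also check that the schedule is admissible for Theorem~\ref{thm:biased_adam}, which requires $m_0\le m_t\le n$. We take this as part of ``the conditions of Theorem~\ref{thm:biased_adam} hold''. In practice it means $n$ must grow at least like $c_mT^a$ along the sequence, or the schedule must be capped at $n$. A capped batch $m_t=n$ only helps, since $b_{n,n}=0$, so the bound on the bias term remains valid.
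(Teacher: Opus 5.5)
Your proof is correct and matches the paper's argument: you bound $m_t^{-2\gamma}\le c_m^{-2\gamma}t^{-2a\gamma}\le c_m^{-2\gamma}t^{-1/2}$ using $2a\gamma\ge 1/2$, then use $\sum_{t=1}^T t^{-1/2}\le 2\sqrt{T}$ and substitute into the second inequality of \eqref{eq:convergence_bound}. Your integral-comparison justification of the sum bound, and your remarks on the uniformity of $C_1,C_2$ and on capping at $m_t=n$ (where $b_{n,n}=0$ makes those bias terms vanish), are valid details that the paper leaves implicit.
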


\begin{proof}
Since $2a\gamma\ge1/2$,
\[
\frac1T\sum_{t=1}^{T}m_t^{-2\gamma}
\le
\frac{c_m^{-2\gamma}}{T}
\sum_{t=1}^{T}t^{-2a\gamma}
\le
\frac{c_m^{-2\gamma}}{T}
\sum_{t=1}^{T}t^{-1/2}
\leq
\frac{2c_m^{-2\gamma}}{\sqrt{T}}.
\]
Substituting into \eqref{eq:convergence_bound} gives the bound.
\end{proof}

The condition $a\ge1/(4\gamma)$ ensures that the
bias term in \eqref{eq:convergence_bound} decays at
least as fast as the optimization term.
For example, if the Assumption~\ref{as:batch_bias} holds
with $\gamma=1/2$, yielding a bias bound of order
$m_t^{-1/2}$, square-root batch growth
$m_t\asymp t^{1/2}$ is sufficient.
Because $m_t\le n$, unbounded polynomial growth
requires the dataset size to increase with the
optimization horizon. For a fixed dataset, we therefore use the practical capped schedule
\begin{equation}\label{eq:batch_schedule}
m_t
=
\min\left\{
n,\,
m_{\max},\,
\max\left(
m_{\min},
\left\lceil ct^{1/2}\right\rceil
\right)
\right\}.
\end{equation}
where $c>0$ and $m_0\le m_{\min}\le m_{\max}\le n$.
Between the lower and upper limits, this schedule follows
square-root growth, consistent with the sufficient rate in
Corollary~\ref{cor:biased_adam_rate} when $\gamma\ge1/2$.

\subsection{Scalable Approximate Prediction}\label{sec:prediction_minibatch}
Section~\ref{sec:prediction} develops the linked-GP predictor by
analytically propagating uncertainty in the latent representation.
Although this avoids Monte Carlo integration, exact prediction still
requires full-data covariance factorizations and linear solves in both
layers. For large $n$, the resulting $\mathcal O(n^3)$ computation and
$\mathcal O(n^2)$ storage can become prohibitive. We therefore use a
hybrid approximation that combines a Nystr\"om approximation \citep{NIPS2000_nystrom} in the
first layer with local GP prediction in the second layer.


For the first-layer GP, let
$Q=\{q_1,\ldots,q_{n_Q}\}\subset\{1,\ldots,n\}$ index a subset of
$n_Q\ll n$ training inputs, and let
\(
\bX_Q
=
(\bx_{q_1},\ldots,\bx_{q_{n_Q}})^\top.
\)
We approximate the corresponding full covariance matrix by
\[
K_{\hat\theta_{1j}}(\bX)
\approx
K_{\hat\theta_{1j}}(\bX,\bX_Q)
K_{\hat\theta_{1j}}(\bX_Q)^{-1}
K_{\hat\theta_{1j}}(\bX_Q,\bX),
\qquad
j=1,\ldots,D,
\]
where $K_{\hat\theta_{1j}}(\bX,\bX_Q)=K_{\hat\theta_{1j}}(\bX_Q,\bX)^\top$ is the cross-covariance matrix between $\bX$ and $\bX_Q$. The first-layer predictive quantities for a new input $\bx_*$ are then
computed using the resulting Nystr\"om approximation, yielding an
approximate latent predictive distribution with mean
$\hat{\mu}_{w_j}(\bx_*)$. Rather than evaluating the full linked-GP predictor in the second
layer, we use $\hat{\mu}_{w_j}(\bx_*)$ as a plug-in latent input.
We identify its $k$ nearest neighbors among the fitted latent states
$\widehat{\bW}$ and construct the second-layer GP using only these
neighbors and their corresponding responses. Consequently, each
second-layer prediction requires factorization of only a $k\times k$
covariance matrix rather than the full $n\times n$ matrix.  In the numerical studies, we use \(n_Q=1000\) training inputs for the Nystr\"om approximation and \(k=100\) nearest neighbors for the local second-layer GP.

This approximation sacrifices propagation of first-layer predictive
uncertainty through the second layer: conditional on
$\hat{\mu}_{w_j}(\bx_*)$ the local GP retains the usual conditional
predictive variance, but it does not integrate over the predictive
distribution of the latent representation. We use the exact linked-GP
predictor when full-data prediction is computationally feasible and
the scalable approximation above for larger datasets.

\section{Numerical Experiments}\label{sec:numerical}

We evaluate the predictive performance and computational efficiency of
LaMP-DGP through a collection of synthetic benchmark problems. The main
experiments focus on deterministic computer experiments, while
additional experiments with stochastic outputs are reported in
Appendix~\ref{appendix:noise}.

\subsection{Simulation Setup}\label{sec:numericalsetup}

We consider training sample sizes
\(
n\in\{100,500,1000,5000,10000,100000\}
\)
and a fixed test sample size of $n_{\mathrm{test}}=1000$, with 10
independent replications for each experimental configuration. In each
replication, the training and test inputs are generated independently
using randomized Latin hypercube designs \citep{LHD1979}. The training
responses are standardized using the sample mean and standard deviation
of the training data. All experiments are conducted on a
high-performance computing cluster, with each task allocated 8 CPU cores
and 128 GB of memory, and with parallelization enabled where supported.

For the proposed  \texttt{LaMP-DGP}, we use the full-batch optimization procedure
described in Section~\ref{sec:LaMP-DGP} for $n=100$, with
5000 epochs. For $n>100$, we use the mini-batch procedure in
Section~\ref{sec:LaMP-DGP SGD} for 1000 epochs, with the
capped square-root batch-size schedule in \eqref{eq:batch_schedule}
using $m_{\min}=150$, $m_{\max}=500$, and $c=15$. Prediction follows
Section~\ref{sec:prediction} for $n\le1000$ and
Section~\ref{sec:prediction_minibatch} for $n>1000$. For the
deterministic experiments, we fix $g=10^{-4}$. In the noisy experiments
of Appendix~\ref{appendix:noise}, $g$ is estimated from the data.
The learning rates and other Adam hyperparameters are reported in Appendix~\ref{appendix:optimizer_settings}.

We compare \texttt{LaMP-DGP} with three existing DGP approaches: MCMC with a Vecchia approximation
(\texttt{MCMC-VEC}), implemented using \textsf{deepgp}
\citep{booth2024deepgp}; stochastic gradient Hamiltonian Monte Carlo (\texttt{SGHMC}), using the authors' publicly available code \citep{havasi2018sghmccode}; and doubly stochastic variational inference
(DSVI). We evaluate DSVI using two implementations:
\texttt{DSVI-GPyTorch}, implemented in \textsf{GPyTorch}
\citep{gardner2018gpytorch}, and
\texttt{DSVI-GPflux}, implemented in \textsf{GPflux}
\citep{dutordoir2021gpflux}. Detailed implementation and tuning settings
are provided in Appendix~\ref{appendix:optimizer_settings}. For all methods,
the latent dimension is set equal to the input dimension, $D=d$.
For computational feasibility, \texttt{MCMC-VEC} is run with a
shorter MCMC chain than the \textsf{deepgp} default, and it is not
evaluated at $n=100000$ because of its computational cost.

Model performance is evaluated using root mean squared error (RMSE) to
assess predictive mean accuracy and the continuous ranked probability
score (CRPS) \citep{gneiting2007strictly} to assess the predictive
distribution. Lower values of both RMSE and CRPS indicate better
predictive performance.

\subsection{Predictive and Computational Performance}

Figures~\ref{fig:schaffer_park_g4_g6_circuit} and \ref{fig:piston_borehole_wingweight_turb12_oo15} summarize the predictive performance
and computational cost of the five emulators on the ten
benchmark functions, with detailed descriptions of the test functions provided in Appendix~\ref{appendix:function}. Among the benchmark set, the
Schaffer and G-functions are particularly challenging for stationary GP
surrogates because of their rapid local variation and abrupt changes in
response behavior, motivating the use of DGP input warping
\citep{sauer2023vecchia}.

\begin{figure}[!hp]
    \centering
    \includegraphics[width=\linewidth]{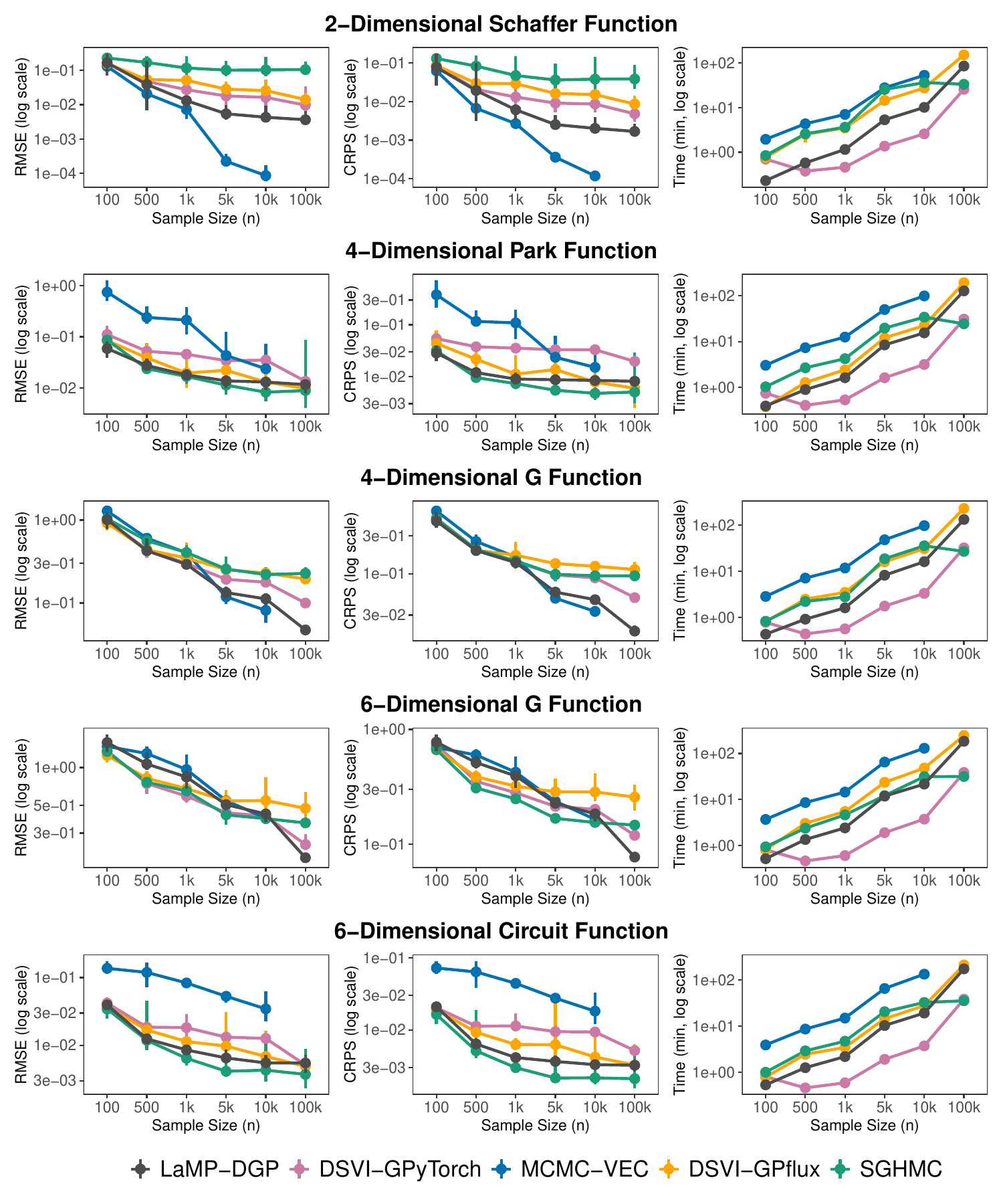}\centering
    \caption{Predictive performance of the five emulators on five benchmark functions (top to bottom): Schaffer function, Park function,  G function, and Circuit function. Points represent means across 10 independent repetitions, and vertical lines indicate the corresponding minimum-to-maximum ranges.}
    \label{fig:schaffer_park_g4_g6_circuit}
\end{figure}

\begin{figure}[htbp]
    \centering
    \includegraphics[width=\linewidth]{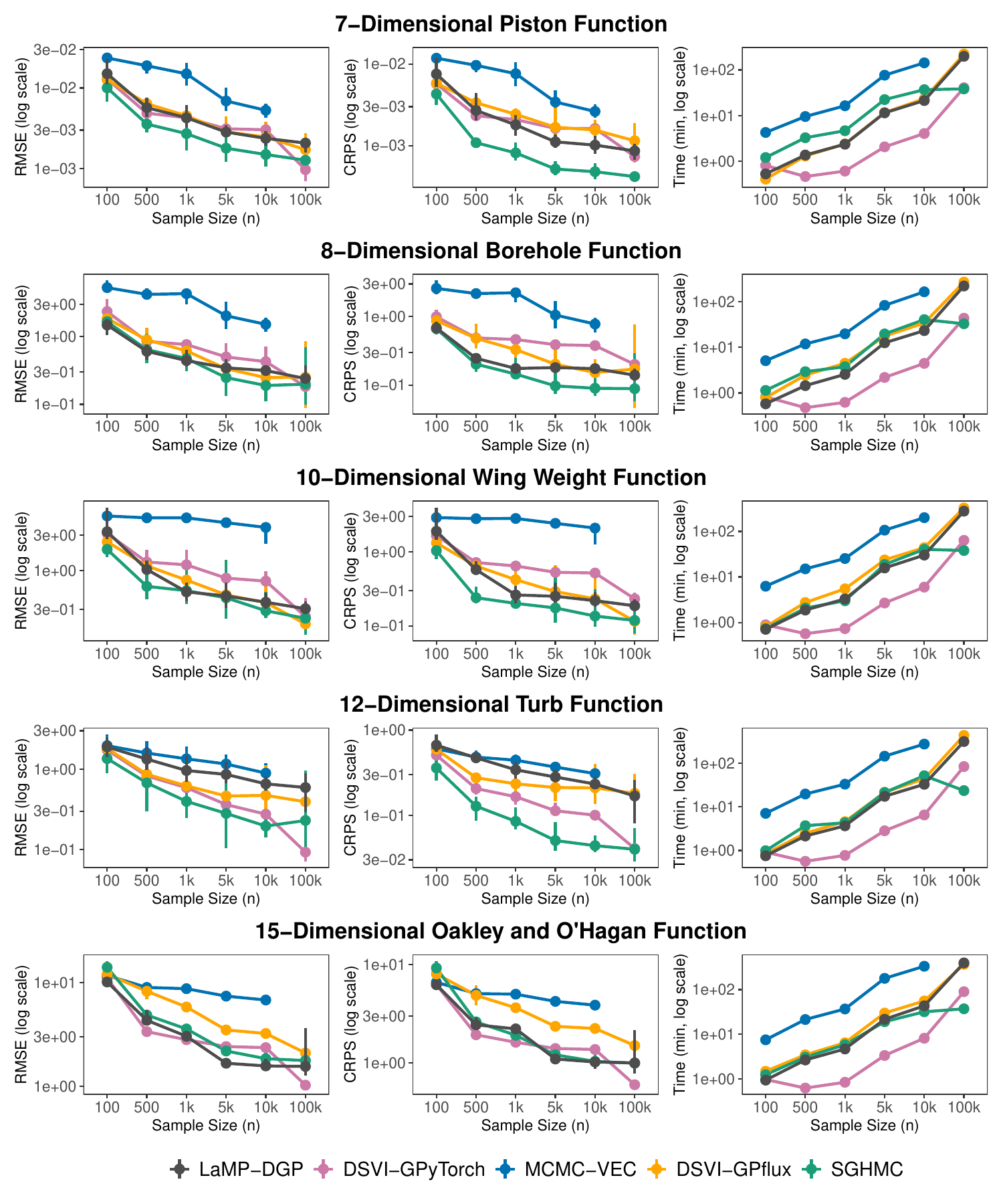}\centering
    \caption{Predictive performance of the five emulators on five benchmark functions (top to bottom): Piston function, Borehole function, Wing weight function, Turb function, and Oakley and O'Hagan function. Points represent means across 10 independent repetitions, and vertical lines indicate the corresponding minimum-to-maximum ranges.}
    \label{fig:piston_borehole_wingweight_turb12_oo15}
\end{figure}

Overall, \texttt{LaMP-DGP} exhibits consistently competitive predictive
performance across functions with varying dimensions, sample sizes, and
response-surface characteristics. Although it does not uniformly attain
the lowest RMSE or CRPS, it performs near the best method across most
benchmarks and avoids substantial deterioration on any problem except
the Turb function. In particular, \texttt{LaMP-DGP} performs
strongly on the G-functions, Park, Circuit, Borehole, Wing Weight, and
Oakley--O'Hagan benchmarks.

The competing methods exhibit more pronounced problem-dependent
behavior. \texttt{MCMC-VEC} performs exceptionally well on the Schaffer
function and is also highly competitive on the G-functions, but is less
competitive on several of the other benchmarks. \texttt{SGHMC} displays an almost complementary pattern: it performs poorly on the Schaffer function and the four-dimensional G-function, notably the two benchmarks highlighted above for their pronounced nonstationary behavior, but is among the strongest methods on several other problems, including Circuit, Piston, Wing Weight, and Turb. The two DGP-DSVI implementations also show noticeably different behavior across benchmarks. \texttt{DSVI-GPyTorch} performs particularly well on the Turb and Oakley--O'Hagan functions at large sample sizes, while \texttt{DSVI-GPflux} is competitive on some problems but exhibits greater variability across benchmarks. In general, the DGP-DSVI methods are less consistently competitive across the benchmark set than \texttt{LaMP-DGP} and \texttt{SGHMC}, although they can perform particularly well on individual problems.  These results
highlight the sensitivity of the competing DGP inference strategies to
the underlying response-surface structure. In contrast,
\texttt{LaMP-DGP} maintains comparatively stable predictive performance
across these different regimes.

The additional noisy experiments in Appendix~\ref{appendix:noise} show
a similar pattern. Under an noise-to-signal ratio (NSR) of $0.01$, \texttt{LaMP-DGP} remains competitive on both the Park and G-function benchmarks. On the Park function, \texttt{SGHMC} is less competitive at small and moderate sample sizes but improves the performance at \(n=100000\), together with \texttt{DSVI-GPyTorch}. On the G-function, \texttt{LaMP-DGP}, \texttt{MCMC-VEC}, and \texttt{SGHMC} are competitive over much of the sample-size range, although \texttt{SGHMC} deteriorates somewhat at the largest sample size.  Overall, the noisy experiments
again show that the relative ranking of the competing methods depends
on the response surface, while \texttt{LaMP-DGP} remains competitive
across both examples.

The computational results reveal a clearer separation among the
methods. \texttt{MCMC-VEC} is consistently the most computationally
expensive approach, with the difference becoming increasingly pronounced
as $n$ grows, and is therefore not evaluated at $n=100000$.
\texttt{DSVI-GPyTorch} is generally the fastest method, particularly at
larger sample sizes. The runtimes of \texttt{LaMP-DGP},
\texttt{SGHMC}, and \texttt{DSVI-GPflux} are broadly comparable across
the benchmark problems. Thus, \texttt{LaMP-DGP} achieves its consistently
competitive predictive performance without the substantially greater
computational cost of \texttt{MCMC-VEC}.

Taken together, the experiments suggest that \texttt{LaMP-DGP} provides
a favorable balance between predictive performance and computational
cost. While individual competitors can be superior for particular
response surfaces, \texttt{LaMP-DGP} is competitive across a broader
range of benchmark structures and sample sizes.

\section{Real-World Computer Experiments}\label{sec:real data}

In this section, we evaluate LaMP-DGP and the competing methods on three
real-world computer experiments, ranging from the relatively small PBX
9501 Gold and Z Machine datasets to the larger GRACE satellite-drag
datasets.

\subsection{GRACE Satellite Drag}

Atmospheric drag is a major source of uncertainty in low-Earth-orbit trajectory prediction. Accurate drag-coefficient estimates are therefore important for orbit determination, collision-risk assessment, and satellite-lifetime estimation \citep{sun2019emulating}. We consider simulations of the Gravity Recovery and Climate Experiment (GRACE) satellite obtained with the Test Particle Monte Carlo (TPMC) simulator \citep{mehta2014modeling}. Under free-molecular-flow conditions, TPMC models interactions between atmospheric particles and the satellite surface to estimate the drag coefficient. The simulator was developed at Los Alamos National Laboratory in \textsf{C} and is accessible through the \textsf{R} interface \texttt{tpm}\footnote{\url{https://bitbucket.org/gramacylab/tpm/src/master/}} \citep{gramacy2020surrogates}.

The GRACE configuration has seven input variables, as listed in Table~2.5 of \citet{gramacy2020surrogates}. We consider three pure atmospheric species: helium (\ce{He}), molecular oxygen (\ce{O2}), and atomic hydrogen (\ce{H}). The scalar response is the simulated drag coefficient. Following \citet{sauer2023vecchia}, we use a $10{,}000$-run
preprocessing sample to fit a Vecchia GP \citep{katzfuss2021general} to obtain vectorized length-scale estimates. We then rescale each input to the unit interval and then pre-scale it using the corresponding estimated length-scale. All emulators are fitted to these pre-scaled inputs. For each species, we select $10{,}000$ simulations for training and reserve $50{,}000$ disjoint simulations for out-of-sample testing. The experiment is repeated independently 10
times. For \texttt{MCMC-VEC}, the nugget is fixed at
$g=10^{-4}$; all remaining method-specific settings follow those
described in Section~\ref{sec:numerical} and
Appendix~\ref{appendix:optimizer_settings}.

Because previous TPMC studies report prediction accuracy using root mean
squared percentage error (RMSPE) \citep{sauer2023vecchia}, we use RMSPE
rather than RMSE for this application. Figure~\ref{fig:grace} summarizes
RMSPE, CRPS, and total computational time across the 10 replications. 

\begin{figure}[h]
    \centering
    \includegraphics[width=\linewidth]{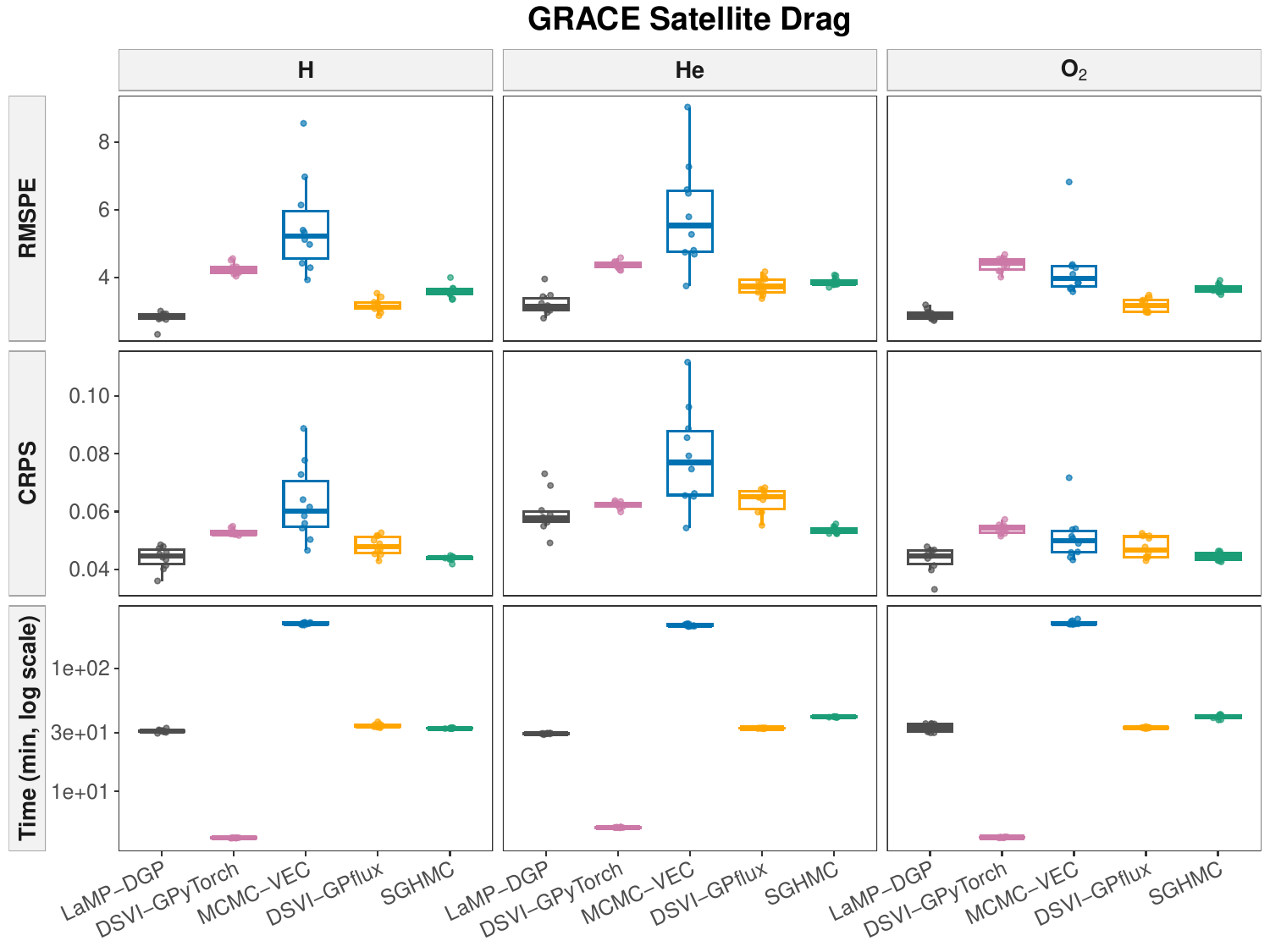}\centering
    \caption{Predictive performance and computational efficiency of five emulators for GRACE satellite-drag simulations under pure-hydrogen (\ce{H}), pure-helium (\ce{He}), and pure-oxygen (\ce{O2}) atmospheric conditions. Boxplots summarize results from 10 independent repetitions.}
    \label{fig:grace}
\end{figure}

\texttt{LaMP-DGP} achieves the lowest median RMSPE for all
three atmospheric species, with relatively small variation across
replications. In terms of CRPS, \texttt{LaMP-DGP} and \texttt{SGHMC} are the strongest methods. The two DGP-DSVI implementations and \texttt{MCMC-VEC} are generally less competitive in predictive performance.  

Computationally, \texttt{DSVI-GPyTorch} is the fastest method, requiring only a few
minutes per replication, while
\texttt{LaMP-DGP}, \texttt{SGHMC}, and \texttt{DSVI-GPflux} require
roughly 30--40 minutes per replication. \texttt{MCMC-VEC} is much more
expensive, requiring over 200 minutes. Thus, on the GRACE application, \texttt{LaMP-DGP} provides
strong predictive accuracy at a substantially lower cost than
\texttt{MCMC-VEC}.

\subsection{Physics-Based Emulation Benchmarks}

We further evaluate the five emulators on the PBX 9501 and Z Machine
benchmarks available through the \textsf{R} package
\texttt{duqling}. Detailed descriptions of the underlying physics and
simulation models are provided in \citet{rumsey2025co} and
\citet{rumsey2025all}. For both benchmarks, we use the ten predefined
training--test splits provided by \texttt{duqling}.

The PBX 9501 dataset consists of rate-stick simulations for a
plastic-bonded high explosive with different jacket materials
\citep{PBX9501,rumsey2025co}. We use the gold-jacket subset, which
contains 500 simulations with six Jones--Wilkins--Lee equation-of-state
parameters as inputs. Following \texttt{duqling}, the scalar response is
the jacket velocity at the fifth PDV location measured
$8\,\mu\mathrm{s}$ after initiation.

For the Z Machine
benchmark, we use the \texttt{Z\_machine\_max\_vel1} dataset
\citep{Zmachine,brown2018}, containing 5000 simulations with six input
parameters. The scalar response is the maximum simulated
tantalum--lithium-fluoride interface velocity in the first experiment.

Figure~\ref{fig:Zmachine_gold} summarizes the predictive performance and
computational cost of the five emulators.  For PBX 9501, the median RMSE
of \texttt{LaMP-DGP} is comparable to those of the DGP-DSVI methods, while it achieves the lowest median CRPS.  For the Z Machine benchmark,  \texttt{DSVI-GPyTorch} and \texttt{SGHMC} attain
slightly lower RMSE than \texttt{LaMP-DGP}, with \texttt{SGHMC}
achieving the lowest CRPS. 
Nevertheless, \texttt{LaMP-DGP} remains
competitive in both metrics and shows relatively stable performance
across the ten data splits.

Computationally, \texttt{DSVI-GPyTorch} is the fastest method, while
\texttt{LaMP-DGP} is comparable to or faster than
\texttt{DSVI-GPflux} and faster than both \texttt{SGHMC} and
\texttt{MCMC-VEC}. Overall, these physics-based benchmarks reinforce
the simulation results in Section~\ref{sec:numerical}: \texttt{LaMP-DGP}
maintains competitive predictive performance across different problems
while requiring moderate computational cost.

\begin{figure}
    \centering
    \includegraphics[width=\linewidth]{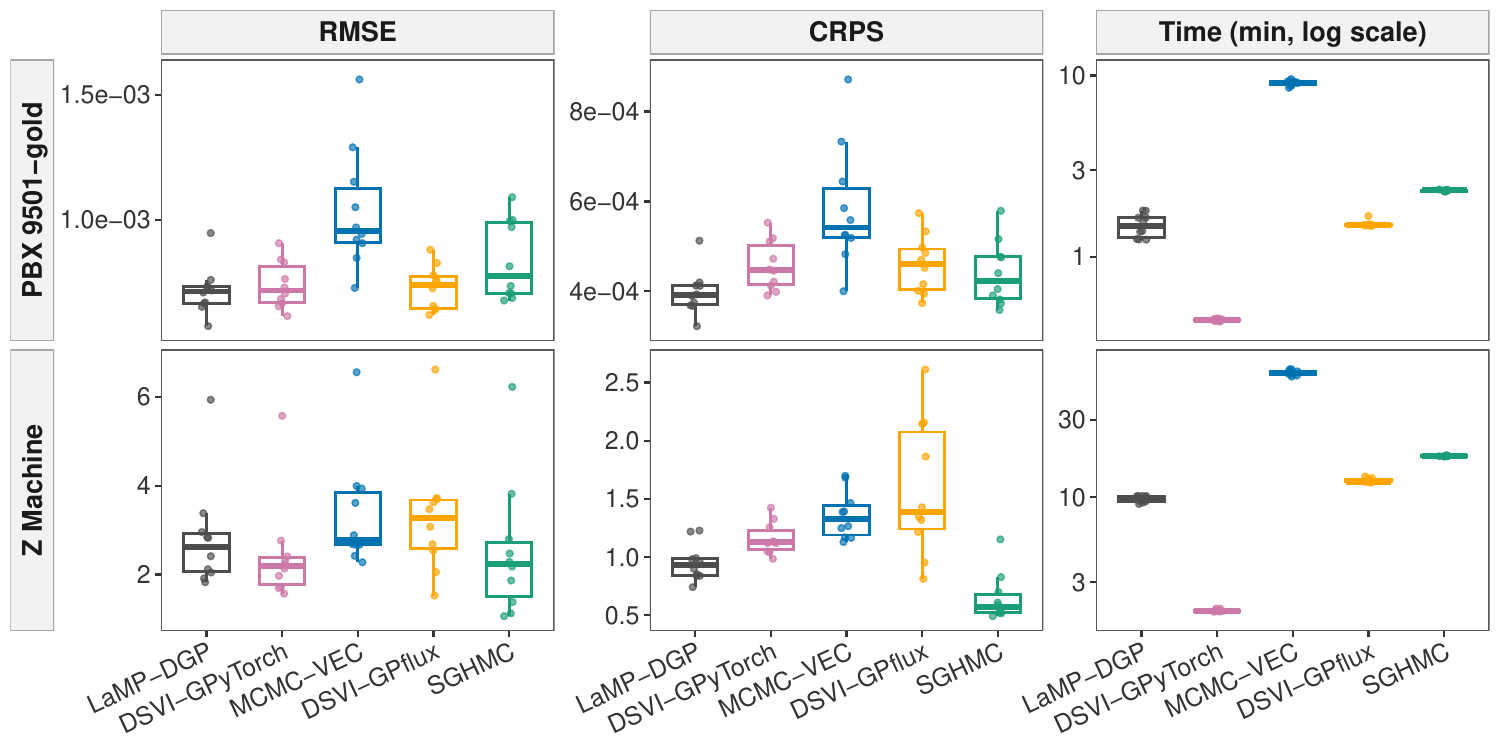}\centering
\caption{Predictive performance of the five emulators for PBX 9501
simulations with a gold jacket (top) and Z Machine simulations
(bottom). Boxplots summarize results across 10 independent
replications.}
    \label{fig:Zmachine_gold}
\end{figure}

\section{Conclusion}\label{sec:conclusion}
Deep Gaussian processes provide flexible surrogate models for complex, nonstationary response surfaces, but inference over their latent representations can be computationally demanding. We proposed LaMP-DGP, which replaces posterior integration over the latent states with direct optimization of a latent MAP objective, turning latent-state inference into a regularized optimization problem that balances fit to the observed responses with the first-layer GP prior. We derived analytic gradients to facilitate efficient optimization, and used linked-GP prediction to efficiently obtain predictive quantities without Monte Carlo integration.

To accommodate larger datasets, we develop a mini-batch optimization
procedure and establish a stationarity bound that accounts for
mini-batch gradient bias, motivating the growing batch-size schedule
used in practice. Across synthetic benchmarks and real-world computer
experiments, LaMP-DGP achieves consistently competitive
predictive performance while being substantially faster than
MCMC-VEC. Variational DGP methods can be faster at large
sample sizes, but their predictive performance is more problem
dependent. In summary, LaMP-DGP provides a favorable balance
between predictive performance and computational efficiency
across a broad range of problems.

Several directions merit further investigation. Although deeper architectures can provide richer hierarchical latent representations, our preliminary experiments with three-layer DGPs showed only modest improvements in predictive accuracy, with occasional deterioration and greater computational cost. Understanding when additional
depth is beneficial, and how to regularize or adaptively select the depth, therefore remains an interesting direction.

Another important direction for future work is to account for uncertainty around the estimated latent configuration. The current MAP formulation uses a single optimized latent representation, which avoids the computational burden of posterior sampling but does not reflect uncertainty in the training latent states. Developing scalable approximations that recover some of this uncertainty while retaining the computational advantages of LaMP-DGP would provide a natural extension of the proposed framework. A promising direction is to develop a scalable Laplace approximation
around the latent MAP solution, using local curvature information to
recover uncertainty in the training latent states without requiring
full posterior sampling.




\vspace{0.2cm}
\noindent\textbf{Supplemental online material.} The supplemental material provides theoretical proofs, additional implementation details, and supporting figures for the numerical studies. The \texttt{LaMPDGP} \textsf{R} package and reproducibility code are also provided.

\if1\blind
\vspace{0.2cm}
\noindent\textbf{Funding.} The authors gratefully acknowledge funding from NSF DMS 2338018.
\fi

\vspace{0.2cm}
\noindent\textbf{Disclosure statement.} The authors report no conflicts of interest.

\vspace{0.2cm}
\noindent\textbf{Data availability statement.}
The data used in this study are either generated from the benchmark functions described in the article or obtained from publicly available sources cited therein. Reproducibility code and supporting materials are provided in the supplementary material.

\vspace{0.2cm}
\noindent\textbf{Declaration of generative AI use.} The authors used OpenAI ChatGPT (GPT-5.6 Sol) to assist with language polishing and software coding. All AI-assisted text and code were reviewed and verified by the authors for accuracy and originality. The authors have reviewed the applicable terms of use and confirm the tool's suitability for publication, and take full responsibility for the integrity of the manuscript, including the accuracy of its references.

\bibliography{ref}

@article{driggs2022biased,
  title={On Biased Stochastic Gradient Estimation},
  author={Driggs, Derek and Liang, Jingwei and Sch{\"o}nlieb, Carola-Bibiane},
  journal={Journal of Machine Learning Research},
  volume={23},
  number={24},
  pages={1--43},
  year={2022}
}

@article{booth2023nonstationary,
  title={Nonstationary {G}aussian process surrogates},
  author={Booth, Annie S and Cooper, Andrew and Gramacy, Robert B},
  journal={arXiv preprint arXiv:2305.19242},
  year={2023}
}

@article{sung2020generalized,
author = {Chih-Li Sung and Ying Hung and William Rittase and Cheng Zhu and C. F. Jeff Wu},
title = {A Generalized {G}aussian Process Model for Computer Experiments With Binary Time Series},
journal = {Journal of the American Statistical Association},
volume = {115},
number = {530},
pages = {945--956},
year = {2020},
publisher = {Taylor \& Francis},
doi = {10.1080/01621459.2019.1604361}
}

@article{chen2022gaussian,
  title={Gaussian process parameter estimation using mini-batch stochastic gradient descent: convergence guarantees and empirical benefits},
  author={Chen, Hao and Zheng, Lili and Al Kontar, Raed and Raskutti, Garvesh},
  journal={Journal of Machine Learning Research},
  volume={23},
  number={227},
  pages={1--59},
  year={2022}
}

@article{rumsey2025all,
  title={All Emulators are Wrong, Many are Useful, and Some are More Useful Than Others: A Reproducible Comparison of Computer Model Surrogates},
  author={Rumsey, Kellin N and Gibson, Graham C and Francom, Devin and Morris, Reid},
  journal={arXiv preprint arXiv:2512.09060},
  year={2025}
}

@article{Ming03042023,
author = {Deyu Ming and Daniel Williamson and Serge Guillas},
title = {Deep {G}aussian Process Emulation using Stochastic Imputation},
journal = {Technometrics},
volume = {65},
number = {2},
pages = {150--161},
year = {2023},
publisher = {ASA Website},
doi = {10.1080/00401706.2022.2124311},
eprint = {https://doi.org/10.1080/00401706.2022.2124311}
}

@article{JMLR:Adam_nonconvex,
  author  = {Chen, Congliang and Shen, Li and Zou, Fangyu and Liu, Wei},
  title   = {Towards Practical {A}dam: Non-Convexity, Convergence Theory, and Mini-Batch Acceleration},
  journal = {Journal of Machine Learning Research},
  year    = {2022},
  volume  = {23},
  number  = {229},
  pages   = {1--47},
  url     = {http://jmlr.org/papers/v23/20-1438.html}
}

@article{sauer2023vecchia,
  title={Vecchia-approximated deep {G}aussian processes for computer experiments},
  author={Sauer, Annie and Cooper, Andrew and Gramacy, Robert B},
  journal={Journal of Computational and Graphical Statistics},
  volume={32},
  number={3},
  pages={824--837},
  year={2023},
  publisher={Taylor \& Francis},
  doi = {10.1080/10618600.2022.2129662}
}

@inproceedings{salimbeni2017doubly,
 author = {Salimbeni, Hugh and Deisenroth, Marc},
 booktitle = {Advances in Neural Information Processing Systems},
 publisher = {Curran Associates, Inc.},
 title = {Doubly Stochastic Variational Inference for Deep {G}aussian Processes},
 url = {https://proceedings.neurips.cc/paper_files/paper/2017/file/8208974663db80265e9bfe7b222dcb18-Paper.pdf},
 volume = {30},
 year = {2017},
 pages = {4588--4599}
}

@inproceedings{kingma2015adam,
  title     = {Adam: A Method for Stochastic Optimization},
  author    = {Kingma, Diederik P. and Ba, Jimmy},
  booktitle = {International Conference on Learning Representations},
  year      = {2015},
  url       = {https://arxiv.org/abs/1412.6980}
}

@article{ajalloeian2020convergence,
  title={On the convergence of {SGD} with biased gradients},
  author={Ajalloeian, Ahmad and Stich, Sebastian U},
  journal={arXiv preprint arXiv:2008.00051},
  year={2020}
}

@article{mak2018efficient,
  title={An efficient surrogate model for emulation and physics extraction of large eddy simulations},
  author={Mak, Simon and Sung, Chih-Li and Wang, Xingjian and Yeh, Shiang-Ting and Chang, Yu-Hung and Joseph, V Roshan and Yang, Vigor and Wu, C F Jeff},
  journal={Journal of the American Statistical Association},
  volume={113},
  number={524},
  pages={1443--1456},
  year={2018},
  publisher={Taylor \& Francis},
  doi = {10.1080/01621459.2017.1409123}
}

@InProceedings{damianou2013deep,
  title = 	 {Deep {G}aussian Processes},
  author = 	 {Damianou, Andreas and Lawrence, Neil D.},
  booktitle = 	 {Proceedings of the Sixteenth International Conference on Artificial Intelligence and Statistics},
  pages = 	 {207--215},
  year = 	 {2013},
  volume = 	 {31},
  series = 	 {Proceedings of Machine Learning Research},
  publisher =    {PMLR}
}

@book{rasmussen2006gaussian,
  title={Gaussian Processes for Machine Learning},
  author={Rasmussen, Carl Edward and Williams, Christopher K. I.},
  year={2006},
  publisher={The MIT Press},
  isbn={026218253X},
  url={http://www.GaussianProcess.org/gpml},
  doi = {10.7551/mitpress/3206.001.0001}
}

@article{sauer2023active,
  title={Active learning for deep {G}aussian process surrogates},
  author={Sauer, Annie and Gramacy, Robert B and Higdon, David},
  journal={Technometrics},
  volume={65},
  number={1},
  pages={4--18},
  year={2023},
  publisher={Taylor \& Francis},
  doi = {10.1080/00401706.2021.2008505}
}

@article{katzfuss2021general,
  title={A general framework for {V}ecchia approximations of {G}aussian processes},
  author={Katzfuss, Matthias and Guinness, Joseph},
  journal={Statistical Science},
  volume={36},
  number={1},
  pages={124--141},
  year={2021},
  publisher={JSTOR},
  doi = {10.1214/19-STS755}
}

@book{gramacy2020surrogates,
  title = {Surrogates: {G}aussian Process Modeling, Design and {O}ptimization for the Applied Sciences},
  author = {Robert B. Gramacy},
  publisher = {Chapman Hall/CRC},
  year = {2020},
  doi = {10.1201/9780367815493},
}

@article{blei2017variational,
  title={Variational inference: A review for statisticians},
  author = {David M. Blei and Alp Kucukelbir and Jon D. McAuliffe},
  journal = {Journal of the American Statistical Association},
  volume = {112},
  number = {518},
  pages = {859--877},
  year = {2017},
  publisher = {Taylor \& Francis},
  doi = {10.1080/01621459.2017.1285773}
}

@article{MORITA2022110788,
title = {Applying {B}ayesian optimization with {G}aussian process regression to computational fluid dynamics problems},
journal = {Journal of Computational Physics},
volume = {449},
pages = {110788},
year = {2022},
issn = {0021-9991},
doi = {https://doi.org/10.1016/j.jcp.2021.110788},
url = {https://www.sciencedirect.com/science/article/pii/S0021999121006835},
author = {Y. Morita and S. Rezaeiravesh and N. Tabatabaei and R. Vinuesa and K. Fukagata and P. Schlatter}
}

@inproceedings{DGPHMC2018,
 author = {Havasi, Marton and Hern\'{a}ndez-Lobato, Jos\'{e} Miguel and Murillo-Fuentes, Juan Jos\'{e}},
 booktitle = {Advances in Neural Information Processing Systems},
 pages = {7506--7516},
 publisher = {Curran Associates, Inc.},
 title = {Inference in Deep {G}aussian Processes using Stochastic Gradient {H}amiltonian {M}onte {C}arlo},
 url = {https://proceedings.neurips.cc/paper_files/paper/2018/file/4172f3101212a2009c74b547b6ddf935-Paper.pdf},
 volume = {31},
 year = {2018}
}

@article{ming2021linked,
  title={Linked {G}aussian process emulation for systems of computer models using {M}at{\'e}rn kernels and adaptive design},
  author={Ming, Deyu and Guillas, Serge},
  journal={SIAM/ASA Journal on Uncertainty Quantification},
  volume={9},
  number={4},
  pages={1615--1642},
  year={2021},
  publisher={SIAM},
  doi = {10.1137/20M1323771}
}

@article{Kyzyurova2018Berger,
author = {Kyzyurova, Ksenia N. and Berger, James O. and Wolpert, Robert L.},
title = {Coupling Computer Models through Linking Their Statistical Emulators},
journal = {SIAM/ASA Journal on Uncertainty Quantification},
volume = {6},
number = {3},
pages = {1151-1171},
year = {2018},
doi = {10.1137/17M1157702}
}

@inproceedings{gardner2018gpytorch,
 author = {Gardner, Jacob and Pleiss, Geoff and Weinberger, Kilian and Bindel, David and Wilson, Andrew G},
 booktitle = {Advances in Neural Information Processing Systems},
 pages = {},
 publisher = {Curran Associates, Inc.},
title     = {{GPyTorch}: Blackbox Matrix-Matrix {G}aussian Process Inference with {GPU} Acceleration},
 url = {https://proceedings.neurips.cc/paper_files/paper/2018/file/27e8e17134dd7083b050476733207ea1-Paper.pdf},
 volume = {31},
 year = {2018}
}

@inproceedings{NIPS2000_nystrom,
 author = {Williams, Christopher and Seeger, Matthias},
 booktitle = {Advances in Neural Information Processing Systems},
 pages = {},
 publisher = {MIT Press},
 title = {Using the {N}ystr\"{o}m Method to Speed Up Kernel Machines},
 url = {https://proceedings.neurips.cc/paper_files/paper/2000/file/19de10adbaa1b2ee13f77f679fa1483a-Paper.pdf},
 volume = {13},
 year = {2000}
}

@Manual{booth2024deepgp,
    title = {deepgp: Bayesian Deep Gaussian Processes using MCMC},
    author = {Annie S. Booth},
    year = {2026},
    note = {R package version 1.2.1},
    url = {https://CRAN.R-project.org/package=deepgp},
    doi = {10.32614/CRAN.package.deepgp},
  }

@article{gneiting2007strictly,
  title={Strictly proper scoring rules, prediction, and estimation},
  author={Gneiting, Tilmann and Raftery, Adrian E},
  journal={Journal of the American Statistical Association},
  volume={102},
  number={477},
  pages={359--378},
  year={2007},
  publisher={Taylor \& Francis},
  doi = {10.1198/016214506000001437}
}

@article{LHD1979,
author = {M. D. McKay and R. J. Beckman and W. J. Conover},
title = {A Comparison of Three Methods for Selecting Values of Input Variables in the Analysis of Output from a Computer Code},
journal = {Technometrics},
volume = {21},
number = {2},
pages = {239--245},
year = {1979},
publisher = {Taylor \& Francis},
doi = {10.1080/00401706.1979.10489755}
}

@phdthesis{park1991tuning,
  author       = {Park, Jeong Soo},
  title        = {Tuning Complex Computer Codes to Data and Optimal Designs},
  school       = {University of Illinois at Urbana-Champaign},
  year         = {1991}
}

@article{ben2007modeling,
  title={Modeling data from computer experiments: an empirical comparison of kriging with {MARS} and projection pursuit regression},
  author = {Einat Neumann Ben-Ari and David M. Steinberg},
journal = {Quality Engineering},
volume = {19},
number = {4},
pages = {327--338},
year = {2007},
publisher = {Taylor \& Francis},
doi = {10.1080/08982110701580930}
}

@article{morris1993bayesian,
  title={Bayesian design and analysis of computer experiments: use of derivatives in surface prediction},
  author={Morris, Max D and Mitchell, Toby J and Ylvisaker, Donald},
  journal={Technometrics},
  volume={35},
  number={3},
  pages={243--255},
  year={1993},
  publisher={Taylor \& Francis},
  doi = {10.1080/00401706.1993.10485320}
}

@book{forrester2008engineering,
  title     = {Engineering Design via Surrogate Modelling: A Practical Guide},
  author    = {Forrester, Alexander I. J. and S\'{o}bester, Andr\'{a}s and Keane, Andy J.},
  year      = {2008},
  publisher = {John Wiley \& Sons},
  isbn      = {978-0-470-06068-1},
  doi       = {10.1002/9780470770801}
}

@article{JMLR:v6:lawrence05a,
  author  = {Neil D. Lawrence},
  title   = {Probabilistic Non-linear Principal Component Analysis with {G}aussian Process Latent Variable Models},
  journal = {Journal of Machine Learning Research},
  year    = {2005},
  volume  = {6},
  number  = {60},
  pages   = {1783--1816},
  url     = {http://jmlr.org/papers/v6/lawrence05a.html}
}

@inproceedings{Lawrence2003gaussian,
  title     = {Gaussian Process Latent Variable Models for Visualisation of High Dimensional Data},
  author    = {Neil D. Lawrence},
  booktitle = {Advances in Neural Information Processing Systems},
  volume    = {16},
  pages     = {329--336},
  year      = {2003},
  publisher = {MIT Press},
  url       = {https://proceedings.neurips.cc/paper/2003/hash/9657c1fffd38824e5ab0472e022e577e-Abstract.html}
}

@InProceedings{lawrence2007hierarchical,
  title = 	 {Hierarchical {G}aussian Process Latent Variable Models},
  author = 	 {Lawrence, Neil D. and Moore, Andrew J.},
  booktitle = 	 {Proceedings of the International Conference in Machine Learning},
  pages = 	 {481--488},
  year = 	 {2007},
  volume = 	 {24},
  publisher =    {Omnipress},
  doi = {10.1145/1273496.1273557}
}

@inproceedings{NIPS2005_ccd45007,
  author    = {Wang, Jack M. and Fleet, David J. and Hertzmann, Aaron},
  title     = {Gaussian Process Dynamical Models},
  booktitle = {Advances in Neural Information Processing Systems},
  publisher = {MIT Press},
  volume    = {18},
  year      = {2005},
  pages     = {1441--1448},
}

@article{dutordoir2021gpflux,
  title     = {{GPflux}: A Library for Deep {G}aussian Processes},
  author    = {Dutordoir, Vincent and Salimbeni, Hugh and Hambro, Eric and McLeod, John and Leibfried, Felix and Artemev, Artem and van der Wilk, Mark and Hensman, James and Deisenroth, Marc P. and John, {ST}},
  journal   = {arXiv preprint arXiv:2104.05674},
  year      = {2021},
  doi = {10.48550/arXiv.2104.05674}
}

@article{rumsey2025co,
  title={Co-active subspace methods for the joint analysis of adjacent computer models},
  author={Rumsey, Kellin N and Hardy, Zachary K and Ahrens, Cory and Vander Wiel, Scott},
  journal={Technometrics},
  volume={67},
  number={1},
  pages={133--146},
  year={2025},
  publisher={Taylor \& Francis},
  doi = {10.1080/00401706.2024.2407314}
}

@misc{PBX9501,
  author = {Rumsey, Kellin N. and Hardy, Zachary K. and Ahrens, Cory and Vander Wiel, Scott},
  publisher = {Harvard Dataverse},
  title     = {{PBX9501}},
  year      = {2024},
  version   = {V1},
  doi       = {10.7910/DVN/YBU6E9},
  url       = {https://doi.org/10.7910/DVN/YBU6E9},
  howpublished = {Harvard Dataverse},
  note = {Version V1. \url{https://doi.org/10.7910/DVN/YBU6E9}},
}

@misc{Zmachine,
  author    = {Brown, Justin and Alexander, C. S. and Asay, J. R. and Dolan, D. H. and Belof, J. L.},
  publisher = {Harvard Dataverse},
  title     = {{Tantalum flow strength ({Z}-machine) simulations and experiments}},
  year      = {2024},
  version   = {V2},
  doi       = {10.7910/DVN/ZCQ0YX},
  url       = {https://doi.org/10.7910/DVN/ZCQ0YX},
  howpublished = {Harvard Dataverse},
  note = {Version V2. \url{https://doi.org/10.7910/DVN/ZCQ0YX}},
}

@article{brown2018,
author = {Brown, J. L. and Hund, L. B.},
title = {Estimating material properties under extreme conditions by using {B}ayesian model calibration with functional outputs},
journal = {Journal of the Royal Statistical Society: Series C (Applied Statistics)},
volume = {67},
number = {4},
pages = {1023-1045},
doi = {https://doi.org/10.1111/rssc.12273},
url = {https://rss.onlinelibrary.wiley.com/doi/abs/10.1111/rssc.12273},
year = {2018}
}

@article{sun2019emulating,
  title={Emulating satellite drag from large simulation experiments},
  author={Sun, Furong and Gramacy, Robert B and Haaland, Benjamin and Lawrence, Earl and Walker, Andrew},
  journal={SIAM/ASA Journal on Uncertainty Quantification},
  volume={7},
  number={2},
  pages={720--759},
  year={2019},
  publisher={SIAM},
  doi = {10.1137/18M1170157}
}

@misc{simulationlib, 
  author       = {Surjanovic, S. and Bingham, D.}, 
  title        = {Virtual Library of Simulation Experiments: Test Functions and Datasets}, 
  year         = {2013},
  howpublished = {Retrieved September 17, 2026, from \url{http://www.sfu.ca/~ssurjano}}
}

@article{marrel2009calculations,
  title={Calculations of {S}obol indices for the {G}aussian process metamodel},
  author={Marrel, Amandine and Iooss, Bertrand and Laurent, Beatrice and Roustant, Olivier},
  journal={Reliability Engineering \& System Safety},
  volume={94},
  number={3},
  pages={742--751},
  year={2009},
  publisher={Elsevier},
  doi = {10.1016/j.ress.2008.07.008}
}

@article{oakley2004probabilistic,
  title={Probabilistic sensitivity analysis of complex models: a {B}ayesian approach},
  author={Oakley, Jeremy E and O'Hagan, Anthony},
  journal={Journal of the Royal Statistical Society Series B: Statistical Methodology},
  volume={66},
  number={3},
  pages={751--769},
  year={2004},
  publisher={Oxford University Press},
  doi = {10.1111/j.1467-9868.2004.05304.x}
}

@article{mehta2014modeling,
  title={Modeling satellite drag coefficients with response surfaces},
  author={Mehta, Piyush M and Walker, Andrew and Lawrence, Earl and Linares, Richard and Higdon, David and Koller, Josef},
  journal={Advances in Space Research},
  volume={54},
  number={8},
  pages={1590--1607},
  year={2014},
  publisher={Elsevier},
  doi = {10.1016/j.asr.2014.06.033}
}

@misc{havasi2018sghmccode,
  author = {Havasi, Marton and
            Hern{\'a}ndez-Lobato, Jos{\'e} Miguel and
            Murillo-Fuentes, Juan Jos{\'e}},
  title = {{sghmc\_dgp}},
  year = {2018},
  howpublished = {GitHub repository},
  url = {https://github.com/cambridge-mlg/sghmc_dgp},
  note = {Commit \texttt{eecd078261def51bd6fda73ebd2b356edf03e17e}}
}

\newpage
\setcounter{page}{1}
\appendix

\noindent{\Huge\bfseries Appendix}

\section{Theoretical Conditions} \label{appendix:conditions}

For every nonempty $S\subseteq\{1,\ldots,n\}$, define
\[
\Sigma_{1j,S}
:=
K_{\theta_{1j}}(\bX_S)+\varepsilon\mathbf I_{|S|},
\qquad
\Sigma_{2,S}
:=
K_{\theta_2}(\bW_S)+g\mathbf I_{|S|},
\]
where $\varepsilon>0$ is a fixed numerical jitter used for numerical
stability. Since the kernel matrices are positive semidefinite,
$\varepsilon>0$, and $g>0$, all full-data and mini-batch covariance
matrices are positive definite and hence nonsingular.

\begin{assumption}
\label{as:bound_par}
For \(i=1,\ldots,n\), let \(\mathbf x_i\) denote the \(i\)-th row of \(\bX\), and let \(Y_i\) denote the corresponding response. There exist finite constants \(B_X,B_Y>0\) such that
\[
\max_{1\le i\le n}\|\mathbf x_i\|_2\le B_X,
\qquad
\max_{1\le i\le n}|Y_i|\le B_Y.
\]
There exists \(B_W>0\) such that the latent iterates satisfy
\[
\bW^{(t)}\in\mathcal W_n
\quad\text{for every }t\ge0,
\]
where
\[
\mathcal W_n
=
\left\{
\bW\in\mathbb R^{n\times D}:
\max_{1\le i\le n}\|\mathbf w_i\|_2\le B_W
\right\},
\]
and \(\bw_i\) denotes the \(i\)th row of \(\bW\). The fixed hyperparameters belong to a compact set
$\mathcal H$. There exist finite positive constants
$\theta_{\min},\theta_{\max},g_{\min},g_{\max}$ such that,
for every $\boldsymbol{\phi}\in\mathcal H$ and every
component $j$,
\[
0<\theta_{\min}
\le \theta_{1j},\theta_2
\le \theta_{\max},
\qquad
0<g_{\min}\le g\le g_{\max}.
\]
Finally, there exist constants
\(0<\tau_{\min}^2\le\tau_{\max}^2<\infty\) such that
\[
\tau_{\min}^2
\le
\widehat{\tau}_S^2
\le
\tau_{\max}^2,
\qquad
\widehat{\tau}_S^2 = \frac{1}{|S|}Y^\top_S \Sigma^{-1}_{2, S}Y_S
\]
uniformly over all admissible parameter values and all full or mini-batch index sets \(S\) considered by the algorithm.

\end{assumption}

\begin{assumption}
\label{as:smoothness}
For every fixed \(\boldsymbol{\phi}\in\mathcal H\), the function \(\cL_n(\cdot ,\boldsymbol{\phi})\) is continuously differentiable. Moreover, there exists \(L>0\) such that
\[
\left\|
\nabla_{\bW}\cL_n(\bU;\boldsymbol{\phi})
-
\nabla_{\bW}\cL_n(\bV;\boldsymbol{\phi})
\right\|_F
\le
L\|\bU-\bV\|_F
\]
for all \(\bU,\bV\in\mathcal W_n\).
\end{assumption}

\begin{assumption}
\label{as:lower_bound}
There exists a constant \(\cL_\star>-\infty\), independent of the optimization iteration, such that
\[
\cL_n(\bW;\boldsymbol{\phi})
\ge
\cL_\star
\quad
\text{for every }
\bW\in\mathcal W_n
\text{ and }
\boldsymbol{\phi}\in\mathcal H.
\]
\end{assumption}

\begin{assumption}
\label{as:bounded_second_moment}
There exists a constant $\sigma^2<\infty$ such that, for every $t$,
\[
\mathbb E\left[
\frac{1}{nD}\|G_t\|_F^2
\middle|\mathcal F_t
\right]
\le
\sigma^2
\qquad\text{almost surely}.
\]
\end{assumption}

\begin{lemma}
\label{lem:adam_direction_bound}
Under Assumption~\ref{as:bounded_second_moment}, for
$\epsilon_{\bW}>0$ and $0\le\beta_{1,\bW},\beta_{2,\bW}<1$,
\[
\mathbb E\left[
\frac{1}{nD}\|\bd_t\|_F^2
\right]
\le
\frac{\sigma^2}{\epsilon_{\bW}^2},
\qquad t\ge1.
\]
\end{lemma}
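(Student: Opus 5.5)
The plan is to bound the Adam direction $\bd_t$ entrywise by a deterministic multiple of the bias-corrected first moment, and then to control that moment through the second-moment condition in Assumption~\ref{as:bounded_second_moment}. The second-moment estimate $\widehat{\bv}_{\bW,t}$ is nonnegative, so we simply drop it from the denominator. The only ingredient needed is then that $\widehat{\bm}_{\bW,t}$ is a convex combination of past stochastic gradients.

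\emph{Step 1: entrywise reduction.} Since $\widehat{\bv}_{\bW,t}\ge 0$ entrywise and $\epsilon_{\bW}>0$, every denominator satisfies $\sqrt{\widehat{\bv}_{\bW,t}}+\epsilon_{\bW}\ge\epsilon_{\bW}$. Hence, entrywise,
\[
\bd_t^{\,2}\le \frac{\widehat{\bm}_{\bW,t}^{\,2}}{\epsilon_{\bW}^2},
\qquad\text{so}\qquad
\|\bd_t\|_F^2\le \epsilon_{\bW}^{-2}\|\widehat{\bm}_{\bW,t}\|_F^2 .
\]

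\emph{Step 2: convex-combination representation.} Unrolling the recursion in \eqref{eq:adam_update} from $\bm_{\bW,0}=\boldsymbol 0$ with $\bh_{\bW,s}=G_s$ gives
\[
\widehat{\bm}_{\bW,t}=\sum_{s=1}^t \omega_{t,s}G_s,
\qquad
\omega_{t,s}=\frac{(1-\beta_{1,\bW})\beta_{1,\bW}^{\,t-s}}{1-\beta_{1,\bW}^{\,t}}.
\]
The weights are nonnegative and sum to one, because $\sum_{s=1}^t(1-\beta_1)\beta_1^{t-s}=1-\beta_1^t$. The normalization is well defined because $1-\beta_{1,\bW}^t>0$ for $t\ge1$ and $\beta_{1,\bW}<1$. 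When $\beta_{1,\bW}=0$ the combination degenerates to $\widehat{\bm}_{\bW,t}=G_t$, which is consistent with this formula. Applying Jensen's inequality (convexity of $x\mapsto x^2$) entrywise, or equivalently convexity of $\|\cdot\|_F^2$, yields
\[
\|\widehat{\bm}_{\bW,t}\|_F^2\le\sum_{s=1}^t\omega_{t,s}\|G_s\|_F^2 .
\]

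\emph{Step 3: expectations.} For each $s$, the tower property together with Assumption~\ref{as:bounded_second_moment} gives
\[
\mathbb E\big[\tfrac{1}{nD}\|G_s\|_F^2\big]
=\mathbb E\Big[\mathbb E\big[\tfrac{1}{nD}\|G_s\|_F^2\,\big|\,\mathcal F_s\big]\Big]\le\sigma^2 .
\]
Combining this with Steps 1 and 2 and using $\sum_s\omega_{t,s}=1$,
\[
\mathbb E\big[\tfrac{1}{nD}\|\bd_t\|_F^2\big]
\le\epsilon_{\bW}^{-2}\sum_{s=1}^t\omega_{t,s}\,\sigma^2
=\sigma^2/\epsilon_{\bW}^2 .
\]

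There is no serious obstacle. The only points needing care are verifying that the bias-corrected weights sum exactly to one, and noting that the conditional bound must be integrated separately for each past index $s$ rather than only for the current $t$.
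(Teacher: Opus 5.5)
Your proposal is correct and matches the paper's proof. Both write $\widehat{\mathbf m}_{\bW,t}$ as a convex combination of $G_1,\ldots,G_t$ with the bias-corrected weights, apply convexity of $\|\cdot\|_F^2$, take expectations using Assumption~\ref{as:bounded_second_moment}, and use that every entry of the Adam denominator is at least $\epsilon_{\bW}$; the only difference is that you apply the denominator bound first, which changes nothing.
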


\begin{proof}
The bias-corrected first-moment estimate can be written as
\[
\widehat{\bm}_{\bW,t}
=
\sum_{r=1}^t\omega_{t,r}G_r,
\qquad
\omega_{t,r}
=
\frac{(1-\beta_{1,\bW})\beta_{1,\bW}^{t-r}}
{1-\beta_{1,\bW}^t},
\]
where $\omega_{t,r}\ge0$ and
$\sum_{r=1}^t\omega_{t,r}=1$. By convexity,
\[
\left\|
\widehat{\bm}_{\bW,t}
\right\|_F^2
\le
\sum_{r=1}^t
\omega_{t,r}\|G_r\|_F^2.
\]
Taking expectations and using
Assumption~\ref{as:bounded_second_moment} gives
\[
\mathbb E\left[
\frac{1}{nD}
\left\|
\widehat{\bm}_{\bW,t}
\right\|_F^2
\right]
\le
\sigma^2.
\]
Since every entry of the Adam denominator is at least
$\epsilon_{\bW}$,
\[
\frac{1}{nD}\|\bd_t\|_F^2
\le
\frac{1}{nD\,\epsilon_{\bW}^2}
\left\|
\widehat{\bm}_{\bW,t}
\right\|_F^2.
\]
Therefore,
\[
\mathbb E\left[
\frac{1}{nD}\|\bd_t\|_F^2
\right]
\le
\frac{\sigma^2}{\epsilon_{\bW}^2}.
\]
\end{proof}

\section{Proof of Theorem~\ref{thm:biased_adam}}
\label{appendix: proof}

The Adam update satisfies
\[
\bW^{(t)}-\bW^{(t-1)}
=
-\alpha_{\bW,t}\bd_t.
\]
By Assumption~\ref{as:smoothness},
\begin{align*}
\cL_n(\bW^{(t)};\boldsymbol{\phi})
&\le
\cL_n(\bW^{(t-1)};\boldsymbol{\phi})-
\alpha_{\bW,t}
\left\langle
\nabla_{\bW}\cL_n
(\bW^{(t-1)};\boldsymbol{\phi}),\bd_t
\right\rangle_F
+
\frac{L\alpha_{\bW,t}^2}{2}\|\bd_t\|_F^2.
\end{align*}
Taking expectations and applying
Assumption~\ref{as:adam_alignment},
the law of total expectation, and
Lemma \ref{lem:adam_direction_bound} yield
\begin{align*}
\mathbb E[\cL_n(\bW^{(t)};\boldsymbol{\phi})]
&\le
\mathbb E[\cL_n(\bW^{(t-1)};\boldsymbol{\phi})]-
c_0\alpha_{\bW,t}
\mathbb E\!\left[
\left\|
\nabla_{\bW}\cL_n
(\bW^{(t-1)};\boldsymbol{\phi})
\right\|_F^2
\right]\\
&\quad+
c_b\alpha_{\bW,t}\mathbb E[\|b_t\|_F^2]
+
\frac{LnD\alpha_{\bW,t}^2\sigma^2}
{2\epsilon_{\bW}^2}.
\end{align*}

Summing over $t=1,\ldots,T$, substituting
$\alpha_{\bW,t}=\alpha/\sqrt T$, and using
Assumption~\ref{as:lower_bound}, we obtain
\begin{align*}
&\quad \frac{c_0\alpha}{\sqrt T}
\sum_{t=1}^{T}
\mathbb E\!\left[
\left\|
\nabla_{\bW}\cL_n
(\bW^{(t-1)};\boldsymbol{\phi})
\right\|_F^2
\right]\le
\mathbb E[\cL_n(\bW^{(0)};\boldsymbol{\phi})]
-\cL_\star
+
\frac{c_b\alpha}{\sqrt T}
\sum_{t=1}^{T}\mathbb E[\|b_t\|_F^2]
+
\frac{LnD\alpha^2\sigma^2}
{2\epsilon_{\bW}^2}.
\end{align*}
Dividing by $c_0\alpha nD\sqrt T$ and bounding the
minimum by the average gives
\[
\min_{0\le t\le T-1}
\mathbb E\!\left[\frac{1}{nD}
\left\|
\nabla_{\bW}\cL_n
(\bW^{(t)};\boldsymbol{\phi})
\right\|_F^2
\right]
\le
\frac{C_1}{\sqrt T}
+
\frac{c_b}{c_0T}
\sum_{t=1}^{T}\mathbb E\left[\frac{1}{nD}\|b_t\|_F^2\right].
\]
where
\[
C_1
=\frac{
\mathbb E[\cL_n(\bW^{(0)}; \boldsymbol \phi)]-\cL_\star
}{
c_0\alpha nD
}
+
\frac{L\alpha\sigma^2}{2c_0\epsilon_\bW^2}.
\]
For $m_0\le m_t\le n$ and $\bW^{(t-1)}\in\mathcal W_n$, Assumption~\ref{as:batch_bias} implies 
\[
\frac{1}{nD}\mathbb E[\|b_t\|_F^2]
\le C_b^2m_t^{-2\gamma}.
\]
Substitution into the preceding bound proves
\eqref{eq:convergence_bound}, with
$C_2=c_bC_b^2/c_0$.

\section{Implementation Details}\label{appendix:optimizer_settings}

This section provides the implementation and tuning settings used for
LaMP-DGP and the competing DGP methods in the numerical experiments.

\subsection{LaMP-DGP}
The optimization, batch-size, prediction, and nugget settings are described in Section~\ref{sec:numericalsetup}. For the mini-batch implementation, we randomly partition the observations into training and validation sets using an 80\%/20\% split. The latent-state batch size follows the schedule
\[
m_t = \min\left\{
n,\,
500,\,
\max\left(
150,
\left\lceil 15 t^{1/2}\right\rceil
\right)
\right\},
\]
with hyperparameter batch size \(m_\phi=500\), hyperparameter update frequency \(q_\phi=50\), validation frequency \(q_{\mathrm{val}}=10\), and prediction subset size \(m_{\mathrm{pred}}=500\). The prediction subset \(S_{\mathrm{pred}}\) is fixed throughout optimization and is used only for validation.  For scalable prediction, the Nystr\"om approximation  uses \(n_Q=1000\) training inputs, and the local second-layer GP uses \(100\) nearest neighbors. For the full batch implementation, we randomly partition the observations into training and validation sets using a 90\%/10\% split, with the hyperparameter update frequency \(q_\phi=25\) and the validation frequency \(q_{\mathrm{val}}=10\).

The learning rates and Adam hyperparameters used for the full-batch and mini-batch regimes are are summarized in Tables~\ref{tab:lamp_lr} and~\ref{tab:lamp_adam}. For the
mini-batch optimization, $\alpha_{\bW}$ is multiplied by $0.8$ every 250 epochs, while all other learning rates remain fixed. The parameters associated with $g$ apply only when the nugget is estimated. The Adam denominator constant $\epsilon_{\boldsymbol{\psi}}$ is shared across
parameter groups.

\begin{table}[htbp]
\centering
\caption{Learning rates used for LaMP-DGP optimization.}
\label{tab:lamp_lr}
\begin{tabular}{lcccc}
\toprule
Optimization
& $\alpha_{\bW}$
& $\alpha_{\boldsymbol\theta_1}$
& $\alpha_{\theta_2}$
& $\alpha_g$ \\
\midrule
Full-batch
& $0.1$ & $0.05$ & $0.05$ & $0.01$ \\
Mini-batch
& $0.05$ & $0.1$ & $0.1$ & $0.05$ \\
\bottomrule
\end{tabular}
\end{table}

\begin{table}[htbp]
\centering
\caption{Adam moment-decay and numerical-stability parameters for
LaMP-DGP. The two length-scale parameter groups share the same
moment-decay coefficients.}
\label{tab:lamp_adam}
\begin{tabular}{lccccccc}
\toprule
Optimization
& $\beta_{1,\bW}$
& $\beta_{2,\bW}$
& $\beta_{1,\boldsymbol\theta}$
& $\beta_{2,\boldsymbol\theta}$
& $\beta_{1,g}$
& $\beta_{2,g}$
& $\epsilon_{\boldsymbol{\psi}}$ \\
\midrule
Full-batch
& $0.9$ & $0.99$
& $0.5$ & $0.999$
& $0.9$ & $0.999$
& $10^{-4}$ \\
Mini-batch
& $0.9$ & $0.9$
& $0.5$ & $0.999$
& $0.9$ & $0.999$
& $10^{-4}$ \\
\bottomrule
\end{tabular}
\end{table}

\subsection{Competing DGP Methods}

\paragraph{MCMC with Vecchia (\texttt{MCMC-VEC}).}
We implement the MCMC-VEC method using \textsf{deepgp} \citep{booth2024deepgp}. We use a two-layer DGP
with a Vecchia approximation, latent dimension $D=d$, and
Mat\'ern kernels with smoothness $\nu=5/2$. We run 2000 MCMC iterations, discard the first
1000 as burn-in, and retain every second subsequent draw, yielding
500 posterior draws for prediction. This is shorter than the default MCMC
run in \textsf{deepgp} and is used to limit computational cost; longer
runs may improve posterior convergence at additional computational
expense.  For deterministic experiments,
the nugget is fixed at $10^{-8}$; for noisy experiments, it is
estimated. All implementations use 8 CPU cores for parallel computation.

\paragraph{DSVI using \textsf{GPyTorch}
(\texttt{DSVI-GPyTorch}).}
We implement doubly stochastic variational inference using
\textsf{GPyTorch} \citep{gardner2018gpytorch}. The model has two GP
layers with latent dimension $D=d$, squared-exponential kernels, linear hidden-layer mean functions, and a constant output-layer mean function. Each GP uses 100 inducing points, initialized from a standard normal
distribution and optimized during training. Models are trained with
Adam using learning rate $10^{-2}$ and batch size $\min(n,10000)$, for
2000 epochs when $n<500$ and 1000 epochs otherwise. We use one Monte
Carlo sample during training and 100 samples for prediction. The
hidden-layer white-noise variance is initialized at $2\times10^{-6}$
and estimated during training. The observation-noise variance is fixed
at $10^{-4}$ for deterministic experiments and estimated for noisy
experiments.

\paragraph{DSVI using \textsf{GPflux}
(\texttt{DSVI-GPflux}).}
We implement the method using \textsf{GPflux}
\citep{dutordoir2021gpflux}, with settings based on
\citet[Section~5]{sauer2023vecchia}. We use two GP layers with latent
dimension $D=d$ and Mat\'ern kernels with smoothness $\nu=5/2$. Each layer uses
$\min(n, 100)$ inducing locations, initialized by $k$-means and optimized
during training. Models are trained for 1000 epochs using Adam with a learning rate of $10^{-2}$ and the package’s default mini-batch size of 32. Following the reference implementation, the
observation-noise variance is initialized at $10^{-4}$ for deterministic, and estimated
during training for both deterministic and noisy experiments.

\paragraph{SGHMC (\texttt{SGHMC}).}
We use the authors' publicly available implementation
\citep{havasi2018sghmccode}. 
The model comprises two GP layers with hidden dimension $D=d$. Following the implementation's default specification, we use squared-exponential kernels, 100 inducing points per layer, and batch size $\min(n,10000)$. We modify the implementation to fix the observation-noise variance at $10^{-4}$ for deterministic experiments, while retaining noise estimation for noisy experiments.

\section{Synthetic Functions}\label{appendix:function}
We provide all the explicit functions used in the numerical studies described in Section~\ref{sec:numerical}.

\begin{itemize}

\item Schaffer Function \citep{simulationlib}: for
\(
\mathbf{x}\in[-2,2]^2
\), 
\[
f(\mathbf{x}) = 0.5 + \frac{\cos^2(\sin(|x_1^2 - x_2^2|))-0.5}{[1+0.001(x_1^2+x_2^2)]^2}.
\]

\item Park Function \citep{park1991tuning}: for $\mathbf{x} \in [0,1]^4$, 
\[
    f(\mathbf{x})=\frac{x_1}{2}\left(\sqrt{1+\frac{(x_2+x_3^2)x_4}{x_1^2}}-1\right)+\left(x_1+3x_4\right)\exp(1+\sin(x_3)).
\]

\item \(G\)-Function \citep{marrel2009calculations}: for
\(
\mathbf{x}\in[0,1]^d
\),
\begin{align*}
    &f(\mathbf{x}) = \prod_{i=1}^d \frac{|4x_i - 2|+a_i}{1+a_i}\\
    &\text{where}\quad a_i = \frac{i-2}{2},\, i = 1,\dots, d.
\end{align*}
In our experiments, we set $d = 4$ and $6$.

\item The Output Transformerless (OTL) Circuit Function \citep{ben2007modeling}: for
    \(
    \mathbf{x}
    =(R_{b1},\,R_{b2},\,R_f,\,R_{c1},\,R_{c2},\,\beta),
    \)
    with
    \[
    \begin{aligned}
    R_{b1} &\in [50,150], &
    R_{b2} &\in [25,70], &
    &R_f \in [0.5,3],\\
    R_{c1} &\in [1.2,2.5], &
    R_{c2} &\in [0.25,1.2], &
    &\beta \in [50,300].
    \end{aligned}
    \]
    The function is defined as
    \begin{align*}
    &f(\mathbf{x}) = \frac{(V_{b1}+0.74)\beta(R_{c2}+9)}{\beta(R_{c2}+9) + R_f} + \frac{11.35R_f}{\beta(R_{c2}+9) + R_f} + \frac{0.74R_f\beta(R_{c2}+9)}{(\beta(R_{c2}+9) + R_f)R_{c1}}\\ 
    & \text{where}\quad V_{b1} = \frac{12R_{b2}}{R_{b1}+R_{b2}}.
    \end{align*}

\item Piston Function \citep{ben2007modeling}: for
\(
\mathbf{x}
=
(M,S,V_0,k,P_0,T_a,T_0),
\)
with
\[
\begin{aligned}
M   &\in[30,60],             &
S   &\in[0.005,0.020],       &
V_0 &\in[0.002,0.010], &
k   &\in[1000,5000],         \\
P_0 &\in[90000,110000],      &
T_a &\in[290,296],           &
T_0 &\in[340,360].
\end{aligned}
\]
The function is defined as
\[
\begin{aligned}
    &f(\mathbf{x}) = 2\pi \sqrt{\frac{M}{k + S^2 \frac{P_0 V_0}{T_0}\frac{T_a}{V^2}}}\\
    &\text{where}\quad V = \frac{S}{2k}\left(\sqrt{A^2 + 4k \frac{P_0 V_0}{T_0}T_a} -A\right)\\
    &\qquad\quad \;\;A = P_0S+19.62M- \frac{kV_0}{S}
\end{aligned}
\]

 \item Borehole Function \citep{morris1993bayesian}: for
    \(
    \mathbf{x}
    =
    (r_w,\,r,\,T_u,\,H_u,\,T_l,\,H_l,\,L,\,K_w),
    \)
    with
    \[
    \begin{aligned}
    r_w & \in [0.05, 0.15], &
    r   & \in [100, 50000], &
    T_u & \in  [63070, 115600], &
    H_u & \in [990, 1110],  \\
    T_l & \in [63.1, 116],  &
    H_l & \in [700, 820],  &
    L & \in [1120, 1680], &
    K_w & \in [9855, 12045].
    \end{aligned}
    \]
    The function is defined as
    \[
f(\mathbf{x}) = \frac{2\pi T_u (H_u - H_l)}{\ln\left(\frac{r}{r_w}\right) \left[ 1 + \frac{2 L T_u}{\ln\left(\frac{r}{r_w}\right) r_w^2 K_w} + \frac{T_u}{T_l} \right]}.
\]

\item Wing Weight Function \citep{forrester2008engineering}:
for
\(
\mathbf{x}
=
(S_w,W_{fw},A,\Lambda,q,\lambda,t_c,N_z,W_{dg},W_p)
\), with
\[
\begin{aligned}
S_w     &\in[150,200],       &
W_{fw}  &\in[220,300],       &
A       &\in[6,10], &
\Lambda &\in[-10,10],        \\
q       &\in[16,45],         &
\lambda &\in[0.5,1], &
t_c     &\in[0.08,0.18],     &
N_z     &\in[2.5,6],         \\
W_{dg}  &\in[1700,2500],     &
W_p     &\in[0.025,0.08].
\end{aligned}
\]
where $\Lambda_r=\pi\Lambda/180$ is the sweep angle in radians. The function is defined as
\[
f(\mathbf{x)}=0.036S_w^{0.758}W_{fw}^{0.0035}\left(\frac{A}{\cos^2\Lambda_r}\right)^{0.6}q^{0.006}\lambda^{0.04}\left(\frac{100t_c}{\cos\Lambda_r}\right)^{-0.3}(N_zW_{dg})^{0.49}+S_wW_p.
\]

\item Turbulence-Inspired Function (Turb) \citep{rumsey2025all}:
for
\[
\mathbf{x}
=
\left(
l_r,\,
y_{\mathrm{wall}},\,
S,\,
\ell,\,
C_\mu,\,
A^+,\,
\beta,\,
\chi,\,
C_p,\,
C_d,\,
\alpha,\,
s
\right),
\]
with
\[
\begin{aligned}
l_r              &\in [4,8],          &
y_{\mathrm{wall}} &\in [0.001,0.2],   &
S                 &\in [0.05,10],\\
\ell              &\in [0.01,0.5],    &
C_\mu             &\in [0.05,0.3],    &
A^+               &\in [2,25],\\
\beta             &\in [-2,2],        &
\chi              &\in [-3,3],        &
C_p               &\in [0.2,2],\\
C_d               &\in [0.2,2],       &
\alpha            &\in [0,2],         &
s                 &\in [0.1,5].
\end{aligned}
\]
The function is defined as
\[
\begin{aligned}
&f(\mathbf{x})
=
\frac{P}{(1+\beta^2)D}
+
\frac{\nu_t}{2(1+\nu_t)}
+
0.1\,\tanh(\chi)\log(1+l_r)
+
0.2\,\frac{s^2}{1+s^2+\alpha^2}
+
0.05\,\alpha f_w,\\
&\text{where}\quad f_w
=
\big(
1-\exp\!\left(-A^+y_{\mathrm{wall}}\right)
\big)^2,\;
P=
C_p |S|^{3/2}\ell
\left(1+\frac{1}{4}\tanh\beta\right),\\
&\qquad \quad \;\;\; D
=
C_d\left(1+\ell^2\right)\left(1+\alpha^2\right),\;
\nu_t
=
\frac{C_\mu\ell^2|S|f_w}{1+|\beta|}.
\end{aligned}
\]
\item Oakley--O'Hagan Function \citep{oakley2004probabilistic}: for
$\bx\in[-3,3]^{15}$,
the function is then defined as
\[
\begin{aligned}
&f(\mathbf{x})=\mathbf{a}_1^\top\mathbf{x}+\mathbf{a}_2^\top\sin(\mathbf{x})+\mathbf{a}_3^\top\cos(\mathbf{x})+\mathbf{x}^\top M\mathbf{x},\\
& \mathbf{a}_1=(0.0118,0.0456,0.2297,0.0393,0.1177,0.3865,0.3897,0.6061,\\
&\qquad \qquad 0.6159,0.4005,1.0741,1.1474,0.7880,1.1242,1.1982)^\top\\
&\mathbf{a}_2=(0.4341,0.0887,0.0512,0.3233,0.1489,1.0360,0.9892,0.9672,\\
&\qquad \qquad 0.8977,0.8083,1.8426,2.4712,2.3946,2.0045,2.2621)^\top\\
&\mathbf a_3
=(0.1044,0.2057,0.0774,0.2730,0.1253,
        0.7526,0.8570,1.0331,\\
       &\qquad \qquad 0.8388,0.7970,2.2145,2.0382,2.4004,
        2.0541,1.9845)^\top.
\end{aligned}
\]
The matrix $M$ has rows $\mathbf m_i^\top$, $i=1,\ldots,15$,
where
\begin{align*}
\mathbf{m}_{1}={}&(-0.0225,-0.1850,0.1342,0.3687,0.1717,0.1365,-0.4403,-0.0814,\\*
&\qquad 0.7132,-0.4436,0.5038,-0.0241,-0.0459,0.2167,0.0559)^\top,\\[2pt]
\mathbf{m}_{2}={}&(0.2566,0.0538,0.2580,0.2380,-0.5913,-0.0816,-0.2875,0.4158,\\*
&\qquad 0.4975,0.0839,-0.1106,0.0332,-0.1398,-0.0310,-0.2232)^\top,\\[2pt]
\mathbf{m}_{3}={}&(-0.0560,0.1954,0.0955,-0.2863,-0.1444,0.2237,0.1453,0.2900,\\*
&\qquad 0.2311,-0.3193,-0.2904,-0.2096,0.4314,0.0244,0.0449)^\top,\\[2pt]
\mathbf{m}_{4}={}&(0.6645,0.4307,0.2992,-0.1620,-0.3148,-0.3903,0.1768,0.0580,\\*
&\qquad 0.1723,0.1347,-0.3528,0.2515,-0.0188,0.3648,-0.3250)^\top,\\[2pt]
\mathbf{m}_{5}={}&(-0.1213,0.1246,0.1066,0.0466,-0.2168,0.1949,-0.0655,0.0244,\\*
&\qquad -0.0968,0.1937,0.3335,0.3130,-0.0836,-0.2534,0.3733)^\top,\\[2pt]
\mathbf{m}_{6}={}&(-0.2838,-0.3282,-0.1050,-0.2207,-0.1371,-0.1443,-0.1150,0.2242,\\*
&\qquad -0.0304,-0.5151,0.0173,0.0390,0.3607,0.3090,0.0500)^\top,\\[2pt]
\mathbf{m}_{7}={}&(-0.0779,0.0037,0.8869,-0.2659,-0.0793,-0.0427,-0.1865,-0.3560,\\*
&\qquad -0.1750,0.0887,0.4003,-0.0560,0.1372,0.2149,-0.0113)^\top,\\[2pt]
\mathbf{m}_{8}={}&(-0.0923,0.5921,0.0313,-0.0331,-0.2431,-0.0998,0.0345,0.0951,\\*
&\qquad -0.3380,0.0064,-0.6121,0.0813,0.8868,0.1425,0.1478)^\top,\\[2pt]
\mathbf{m}_{9}={}&(-0.1319,0.5288,0.1265,0.0451,0.5837,0.3729,0.1140,-0.2948,\\*
&\qquad -0.5701,0.4629,-0.0941,0.1396,-0.3861,-0.4490,-0.1460)^\top,\\[2pt]
\mathbf{m}_{10}={}&(0.0581,-0.3229,0.0931,0.0724,-0.5692,0.5255,0.2366,-0.0118,\\*
&\qquad 0.0718,0.0783,-0.1336,0.2272,0.1437,-0.4520,-0.5557)^\top,\\[2pt]
\mathbf{m}_{11}={}&(0.6615,0.3463,0.1410,0.5188,-0.2802,-0.1603,-0.0684,-0.2043,\\*
&\qquad 0.0697,0.2311,-0.0444,-0.1646,0.2162,0.0043,-0.0874)^\top,\\[2pt]
\mathbf{m}_{12}={}&(0.3160,-0.0276,0.1343,0.1350,0.0540,-0.1737,0.1753,0.0603,\\*
&\qquad -0.1791,-0.3106,-0.2536,0.0258,-0.4301,-0.6227,-0.0340)^\top,\\[2pt]
\mathbf{m}_{13}={}&(-0.2904,0.0341,0.0349,-0.1212,0.0260,-0.3355,-0.4142,0.0532,\\*
&\qquad -0.2710,-0.0263,0.4102,0.2664,0.1558,-0.1867,0.0199)^\top,\\[2pt]
\mathbf{m}_{14}={}&(-0.2439,-0.4410,0.0126,0.2495,0.0711,0.2462,0.1748,0.0085,\\*
&\qquad 0.2515,-0.1466,-0.0846,0.3693,-0.2996,0.1104,-0.7569)^\top,\\[2pt]
\mathbf{m}_{15}={}&(0.0415,-0.2598,0.4640,-0.3611,-0.9498,-0.1650,0.0031,0.0528,\\*
&\qquad 0.2252,0.3839,0.4556,-0.1863,0.0082,0.1667,0.1605)^\top.
\end{align*}

\end{itemize}

\section{Simulation with Noise}\label{appendix:noise}

To assess performance beyond the deterministic setting considered in
the main numerical study, we conduct additional experiments on two
noisy benchmark problems: the four-dimensional Park function and the
four-dimensional $G$-function. In both cases, we standardize the noise-free responses to unit variance and perturb the training responses by independent Gaussian noise, $\epsilon_i \sim \mathcal{N}(0, 0.01^2)$. Thus, the noise-to-signal ratio (NSR), defined as the ratio of the noise
standard deviation to the signal standard deviation, is $0.01$. Figure~\ref{fig:park_g4_noise} summarizes the performance of all five emulators.



\begin{figure}[htbp]
    \centering
    \includegraphics[width=\linewidth]{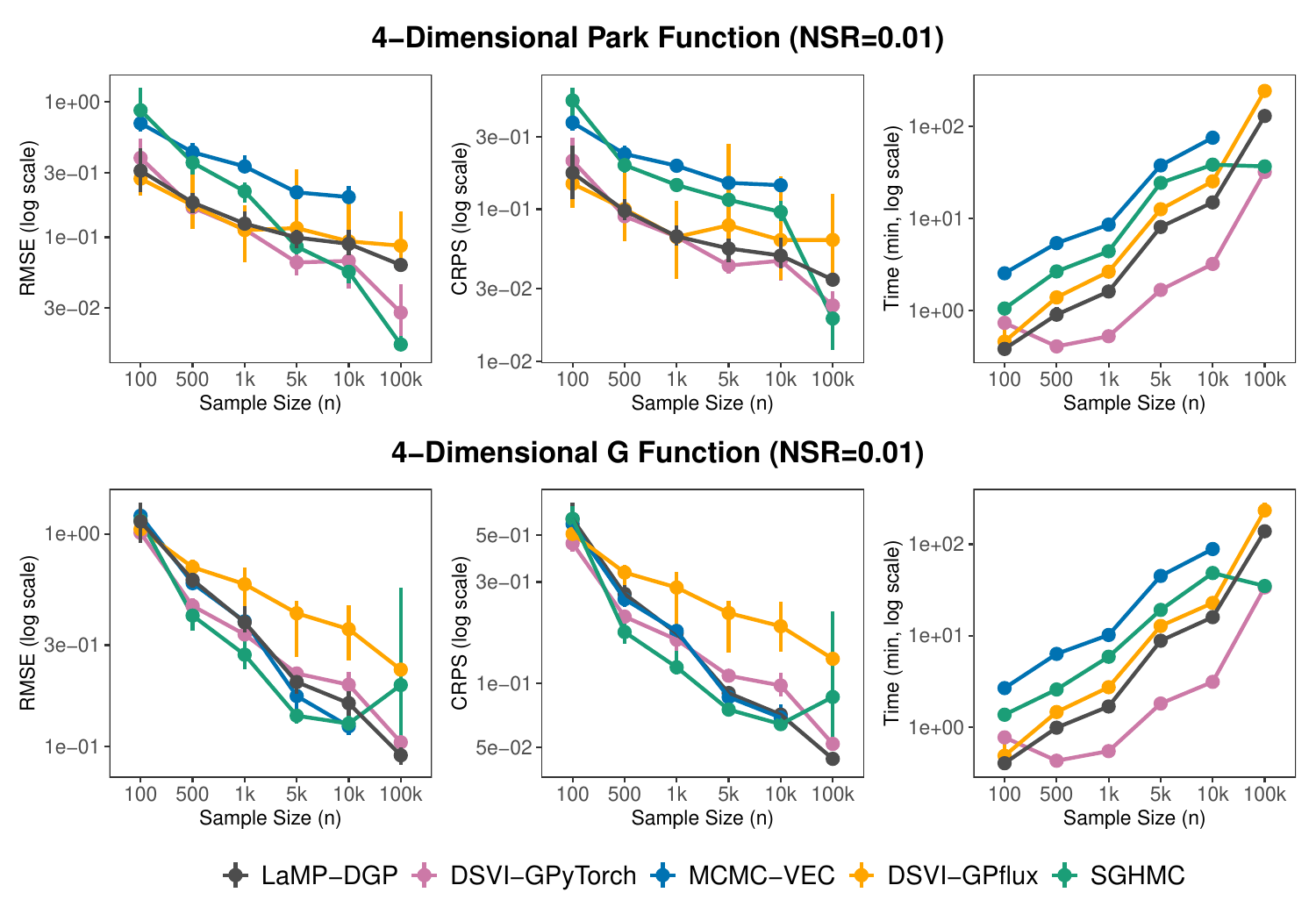}\centering
    \caption{Predictive performance of the five emulators on the Park function (top) and the \(G\)-function (bottom) under a noise-to-signal ratio (NSR) of 0.01. Points represent means across 10 independent repetitions, and vertical lines indicate the corresponding minimum-to-maximum ranges.}
    \label{fig:park_g4_noise}
\end{figure}

\end{document}